\definecolor{shadecolor}{gray}{0.95}
\newenvironment{shadequote}{%
  \begin{center}%
    \begin{minipage}{.95\linewidth}%
      \begin{shaded}%
        \sffamily\slshape}{%
      \end{shaded}
    \end{minipage}%
  \end{center}%
}
\theoremstyle{plain}
\newtheorem{theorem}{Theorem}[section]
\theoremstyle{definition}
\newtheorem{definition}[theorem]{Definition}
\theoremstyle{remark}
\newtheorem{remark}[theorem]{Remark}
\newcommand{\g}{\mathbf{g}}
\newcommand{\G}{\mathbf{G}}
\newcommand{\R}{\mathbb{R}}
\newcommand{\w}{\bm{w}}
\newcommand{\A}{\mathbf{A}}
\renewcommand{\H}{\mathbf{H}}
\newcommand{\Y}{\textcolor{green}{Y}}
\newcommand{\N}{\textcolor{red}{N}}
\title{Adaptive parameter-efficient fine-tuning via Hessian-informed subset selection}
\author{Shiyun Xu\\
\texttt{shiyunxulara@gmail.com} \\
\And
Zhiqi Bu\\
\texttt{woodyx218@gmail.com}}
\begin{document}

\maketitle

\begin{abstract}
Parameter-efficient fine-tuning (PEFT) is a highly effective approach for adapting large pre-trained models to downstream tasks with minimal computational overhead. At the core, PEFT methods freeze most parameters and only trains a small subset (say $<0.1\%$ of total parameters). Notably, different PEFT methods select different subsets, resulting in varying levels of performance. This variation prompts a key question: how to effectively select the most influential subset to train?

We formulate the subset selection as a multi-task problem: maximizing the performance and minimizing the number of trainable parameters. We leverage a series of transformations -- including $\epsilon$-constraint method and second-order Taylor approximation -- to arrive at the classical 0-1 knapsack problem, which we solve through the lens of Pareto optimality. Consequently, we propose AdaPEFT, a Hessian-informed PEFT that adapts to various tasks and models, in which the selected subset empirically transfers across training horizons and model sizes. 
\end{abstract}

\section{Introduction}
Fine-tuning is an important technique in deep learning, which adapts large pre-trained models to new tasks quickly. At high level, there are two classes of fine-tuning methods: (I) fine-tune the entire model (i.e. 100\% of parameters are trainable), or (II) only update a small portion of the model (e.g. 0.1\%) and freeze the majority of parameters. The second class of methods is known as parameter-efficient fine-tuning (PEFT), including examples such as LoRA \cite{hu2022lora}, prompt tuning \cite{lester2021power}, linear probing (or last-layer tuning), LayerNorm tuning \cite{zhaotuning}, and BitFit \cite{zaken2022bitfit}. In contrast to full-model fine-tuning (FMT), PEFT enjoys $\approx 50\%$ speedup and significantly reduced memory cost, e.g. LoRA uses $5\times$ less memory than FMT on LLAMA2-7B \cite{liloftq}. 

On the other hand, the performance of PEFT depends on the choice of methods, or the choice of a subset of parameters. With a proper choice of trainable parameters, PEFT can be as performant as FMT. For example, training RoBERTa-base on 8 datasets in the GLUE benchmark \cite{wangglue}, FMT gets an average accuracy 86.4\%, LoRA gets 87.2\%, and BitFit gets 85.2\%(see Table 2 in \cite{hu2022lora}); training GPT3 on WikiSQL \cite{zhongSeq2SQL2017}, FMT gets an accuracy 73.8\%, LoRA gets 73.4\%, and BitFit gets 71.3\% (see Table 4 in \cite{hu2022lora}). However, the success of these PEFT methods may not be reproduced in some tasks. For example, LoRA significantly underperforms FMT on CoLA and MRPC datasets  (see Table 1 of \cite{wang2024lora,wang2024lora2}); on coding and mathematics domains, \cite{biderman2024lora} shows that "LoRA substantially underperforms full finetuning" in both instruction tuning and continued learning ($\approx 20B$ tokens).

As a consequence, 
we study the following question:
\begin{shadequote}
Q: How to adaptively select parameter groups (or active subset of parameters) so that the model achieves high utility while only has a small portion of trainable parameters?
\end{shadequote}

To answer this question, we build on top of existing PEFT methods and formulate a \textbf{multi-task problem} on the active set of parameter groups --- $\A$.
\begin{align}
\min_{\A} \left(L_\A:=\min_{\w_{(k)}\in \A} L(\w)\right) \quad\&\quad \min_\A\left(\frac{|\A|}{|\w|}:=\frac{\sum_k \mathbb{I}(\w_{(k)}\in\A)\cdot|\w_{(k)}|}{\sum_k |\w_{(k)}|} \right)
\label{eq:main question0}
\end{align} 

Here $\w:=[\w_{(1)}, \w_{(2)}, ..., \w_{(K)}]$ is the set of parameter groups of a model, where the grouping is inspired by existing PEFT methods. For instance, as shown in \Cref{tab:peft summary}, we see that each PEFT method corresponds to some fixed subset $\A\subseteq \w$.

\begin{table}[!htb]
    \centering
    \caption{Summary of PEFT methods (row) and corresponding parameter groups (column). Here `lora\_A/lora\_B' are low-rank matrices, `head' is the last linear layer, `norm' is layer normalization, and `bias' is bias terms. Y/N indicate whether a parameter group is trainable.}
    \begin{tabular}{|c|c|c|c|c|c|}
    \hline    &lora\_A&lora\_B&head&norm&bias  \\\hline
LoRA & Y& Y&N&N&N\\ 
LoRA-FA &N& Y&N&N&N\\ 
BitFit&N&N&N&N&Y\\
Linear probing&N&N&Y&N&N\\
LayerNorm &N&N&N&Y&N\\\hline
AdaPEFT (ours)&\multicolumn{5}{c|}{adaptive}\\   \hline
   \end{tabular}
    \label{tab:peft summary}
\end{table}
When $\A$ is determined by a PEFT method, we solve the inner problem in \eqref{eq:main question0} over $\w$:
$$\min_{\w_{(k)}\in\A} L(\w)\equiv\min_{\w_{(k)}\in\A} L(\w_{(1)},...,\w_{(K)}).$$
Note when $\A=\w$, we recover the FMT problem.

In contrast to applying a fixed $\A$, we enable the adaptivity of $\A$ in \eqref{eq:main question0}, where we have two tasks: we minimize the loss and minimize the number of trainable parameters. These two tasks may conflict with each other. For instance, the minimum number of trainable parameters is 0 but then the model is not trainable at all, hence the loss is not minimized. 

To resolve the conflict, we solve our multi-task problem under Pareto optimality. 
\begin{definition} (Pareto optimality for PEFT). For two active sets $\A_1$ and $\A_2$, if $L_{\A_1} \leq L_{\A_2}$ and $|\A_1|\leq |\A_2|$ with at least one inequality being strict, then $\A_1$ dominates $\A_2$. Furthermore, an active set is Pareto optimal if it is not dominated by any other sets.
\end{definition}



\paragraph{Contribution.}
\begin{itemize}
    \item We formulate a multi-task problem to adaptively select the active set of parameter groups for PEFT, through the lens of Pareto optimality. 
    \item We transform our multi-task problem (optimized on subsets and weights) to a classical single-task problem (optimized on binary variables) that is known as 0-1 knapsack problem. Specifically, our transformation leverages a series of methods including $\epsilon$-constraint method, gradient descent, and Taylor approximation.
    \item We propose efficient algorithms to compute the Hessian-informed loss reduction and to solve the 0-1 knapsack problem.
    \item We observe consistent patterns indicating that influential parameter groups can be discovered early in training, and such patterns can transfer across model sizes. These patterns are visualized on various tasks and models.
\end{itemize}

\paragraph{Related work.}
We briefly discuss some related work and refer to \Cref{app:related work} for details. Broadly speaking, this work connects to all subset-based PEFT methods, as our problem can be formulated over any collection of such methods. To solve our 0-1 knapsack problem, we draw on technique from \cite{bu2024automatic} to compute Hessian-informed loss reduction, and employ a greedy approximation algorithm \cite{martello1990knapsack} to estimate the Pareto frontier.

\newpage
\section{Problem formulation}
    \begin{table}[!htb]
    \centering
\caption{Roadmap of transformations from multi-task optimization to 0-1 knapsack problem.}
    \begin{tabular}{c|c|c|c}
       reference& problems &variables & transformation \\\hline
        \eqref{eq:main question0} & meta-minimization \& minimization & (subset, $\w$) &-- --\\
        \eqref{eq:eps-constraint method} & constrained meta-minimization & (subset, $\w$) &$\epsilon$-constraint\\
        \eqref{eq:main question3} & constrained meta-minimization & (subset, $\eta$) &gradient descent\\        
        \eqref{eq:main question4} & constrained maximization & subset &Taylor approximation\\     
        \eqref{eq:binary question} & constrained maximization & binary &knapsack problem\\    
        \end{tabular}
    \label{tab:my_label}
\end{table}
\subsection{Notations}
\label{sec:notation}
We denote $\w\in\R^D$ as all model parameters, while $\w_t$ represents the iteration $t$ and $\w_{(k)}$ represents the $k$-th parameter group. 
The same notation follows for other variables including the mini-batch gradient $\g\in\R^D$. We denote the loss as $L(\w)$, its first-order derivative as $\G_{(k)}:=\frac{\partial L}{\partial \w_{(k)}}$ and its second-order derivative as $\H_{(k)}:=\frac{\partial^2 L}{\partial\w_{(k)}^2}$. We omit $t$ when it is obvious from the context. We use $|\cdot|$ to count the number of parameters.

\subsection{Multi-task optimization and Pareto optimality}
The $\epsilon$-constraint method is a scalarization technique to solve multi-task problem. It chooses one task to optimize and converts the remaining tasks into constraints. Thus, \eqref{eq:main question0} leads to
\begin{align}
    \min_{\A} L_{\A} \quad\text{s.t.}\quad|\A|/|\w|\leq \epsilon
    \label{eq:eps-constraint method}
\end{align}

However, we emphasize that a solution to \eqref{eq:eps-constraint method} is not necessarily Pareto optimal in terms of \eqref{eq:main question0} (shown in \Cref{thm:not Pareto}), unless we restrict ourselves to an unrealistic case where \eqref{eq:eps-constraint method} has a unique solution. 

\begin{theorem}
For any $\epsilon\geq 0$, a solution to \eqref{eq:eps-constraint method} may not be Pareto optimal of \eqref{eq:main question0}. Nevertheless, if \eqref{eq:eps-constraint method} has only one solution, then the solution is Pareto optimal.
\label{thm:not Pareto}
\end{theorem}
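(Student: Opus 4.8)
The plan is to treat the two assertions separately: the negative claim (a solution need not be Pareto optimal) calls for an explicit counterexample, while the positive claim (uniqueness forces Pareto optimality) is a short argument by contradiction.

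For the negative part, I would exploit the fact that \eqref{eq:eps-constraint method} only minimizes the loss $L_\A$ and is completely indifferent to the size $|\A|$ among the active sets attaining the minimal loss. The key structural observation is the monotonicity $L_{\A_1}\le L_{\A_2}$ whenever $\A_2\subseteq\A_1$, since enlarging the active set can only lower the attainable minimum of $L$ (the extra groups may simply be held at their frozen values). I would then exhibit a small instance, say two parameter groups with $L_{\emptyset}>L_{\{1\}}=L_{\{1,2\}}$, where group $2$ is redundant given group $1$, and with the group sizes padded by enough frozen parameters that $|\{1,2\}|/|\w|\le\epsilon$. Here both $\{1\}$ and $\{1,2\}$ attain the constrained minimum loss, so $\{1,2\}$ is a valid solution to \eqref{eq:eps-constraint method}; yet $\{1\}$ has the same loss and strictly smaller size, so $\{1\}$ dominates $\{1,2\}$ and the solution $\{1,2\}$ fails to be Pareto optimal. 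This construction works for any $\epsilon>0$, while the degenerate case $\epsilon=0$ (where only $\emptyset$ is feasible) falls under the unique-solution regime handled by the second claim. More generally, whenever the constrained minimum loss is attained by several active sets of different sizes, every non-minimal such set is non-Pareto.

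For the positive part, I would argue by contradiction. Let $\A^*$ be the unique solution of \eqref{eq:eps-constraint method}, and suppose some $\A'$ dominates it, so $L_{\A'}\le L_{\A^*}$ and $|\A'|\le|\A^*|$ with at least one inequality strict. Since $\A^*$ is feasible, $|\A'|\le|\A^*|\le\epsilon|\w|$, so $\A'$ is feasible as well. I then split on which inequality is strict. If $L_{\A'}<L_{\A^*}$, then $\A'$ is a feasible point with strictly smaller loss, contradicting the optimality of $\A^*$. If instead $L_{\A'}=L_{\A^*}$ and $|\A'|<|\A^*|$, then $\A'$ is a second feasible point attaining the same optimal loss, i.e.\ another solution to \eqref{eq:eps-constraint method} distinct from $\A^*$, contradicting uniqueness. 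Both branches are impossible, so $\A^*$ is Pareto optimal.

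I expect the only genuine subtlety to lie in the negative part: ensuring the counterexample is consistent with a real loss landscape rather than an arbitrary assignment of values $L_\A$. Invoking the monotonicity of $L_\A$ in $\A$, and realizing the equal-loss ties through a redundant (or already optimally frozen) group, keeps the construction legitimate, and padding with frozen parameters secures feasibility for the prescribed $\epsilon$. The positive part is essentially routine once the feasibility of the dominating set is observed, which is exactly what the nested bound $|\A'|\le|\A^*|\le\epsilon|\w|$ supplies.
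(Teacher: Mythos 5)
Your proof is correct, and on the second claim it coincides with the paper's own argument: you suppose a dominating $\A'$ exists, observe it is feasible because $|\A'|\leq|\A^\star|\leq\epsilon|\w|$, force $L_{\A'}=L_{\A^\star}$ by optimality of $\A^\star$, and conclude $\A'$ is a second solution, contradicting uniqueness. Where you genuinely go beyond the paper is the first claim. The paper's proof never exhibits an instance in which a solution of \eqref{eq:eps-constraint method} fails to be Pareto optimal; it only shows that \emph{if} such a failure occurs, the dominating set must itself be a solution, which proves the uniqueness statement but leaves the ``may not be Pareto optimal'' assertion as an unproved (if plausible) possibility. Your explicit construction --- a redundant second group, the monotonicity $L_{\A_1}\leq L_{\A_2}$ for $\A_2\subseteq\A_1$ to certify that the tie $L_{\{1\}}=L_{\{1,2\}}$ is realizable by an actual loss landscape, and padding with a large frozen group to satisfy the budget constraint for any prescribed $\epsilon>0$ --- supplies exactly the missing half, so your write-up is strictly more complete than the paper's. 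You are also more careful about the edge case $\epsilon=0$: there the feasible set collapses to $\emptyset$, the solution is unique and hence Pareto optimal, so the counterexample necessarily requires $\epsilon>0$; this means the theorem's ``for any $\epsilon\geq 0$'' is slightly loose as stated, a point the paper glosses over entirely.
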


\begin{proof}
Suppose $\A^\star$ is a solution to \eqref{eq:eps-constraint method} but not Pareto optimal, then $\exists \A^\prime$ such that, 
  \begin{align}
      L_{\A^\prime}\leq L_{\A^\star} \text{ and } |\A^\prime|\leq |\A^\star|
  \end{align}
  with at least one strict inequality. $\A^\star$ being the minimizer of \eqref{eq:eps-constraint method} gives $L_{\A^\star}\leq L_{\A^\prime}$. Hence, $L_{\A^\prime}= L_{\A^\star}$. Then the strict inequality only happens at $|\A^\prime|< |\A^\star|$. However, $|\A^\prime|< |\A^\star|\leq \epsilon$ guarantees that $\A^\prime$ is also a solution to \eqref{eq:eps-constraint method}, which contradicts that $\A^\star$ is the unique solution. 
\end{proof}

To guarantee the Pareto optimality when \eqref{eq:eps-constraint method} has more than one solutions, we propose a \textbf{refined solution} in two steps: (I) exhaustively find all solutions of \eqref{eq:eps-constraint method}, denoted by a set $\text{argmin}_{\A:|\A|/|\w|\leq\epsilon} L_\A$; (II) select one solution with the smallest number of trainable parameters. Mathematically, we select
\begin{align}
\A^\star(\epsilon)=\text{argmin}_\A \left\{|\A|: \A\in\text{argmin}_{\A:|\A|/|\w|\leq\epsilon} L_\A\right\}
\label{eq:A pareto}
\end{align}
\begin{theorem}
For any $\epsilon\geq 0$, the refined solution to \eqref{eq:eps-constraint method} (i.e. \eqref{eq:A pareto}) is always Pareto optimal of \eqref{eq:main question0}.
\label{thm:yes Pareto}
\end{theorem}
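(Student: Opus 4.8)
The plan is to argue by contradiction, mirroring the structure of the proof of \Cref{thm:not Pareto} but exploiting the tie-breaking in step (II) of \eqref{eq:A pareto} to close off the escape route that made the naive $\epsilon$-constraint solution fail. Write $\A^\star:=\A^\star(\epsilon)$ and suppose, toward a contradiction, that $\A^\star$ is not Pareto optimal of \eqref{eq:main question0}. By the definition of Pareto optimality there exists a dominating set $\A'$ with
\begin{align}
L_{\A'}\leq L_{\A^\star} \quad\text{and}\quad |\A'|\leq |\A^\star|,
\end{align}
at least one inequality being strict.

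First I would verify that $\A'$ is feasible for \eqref{eq:eps-constraint method}. Since $\A^\star$ is in particular a minimizer of the constrained problem, it satisfies $|\A^\star|/|\w|\leq\epsilon$; combined with $|\A'|\leq|\A^\star|$ this yields $|\A'|/|\w|\leq\epsilon$, so $\A'$ lies in the feasible region $\{\A:|\A|/|\w|\leq\epsilon\}$. This is the crucial observation --- domination forces the competitor to stay inside the constraint set --- and it is exactly what lets me bring the optimality of $\A^\star$ to bear.

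Next I would collapse the loss inequality into an equality. Because $\A'$ is feasible and $\A^\star$ minimizes $L_\A$ over the feasible region, $L_{\A^\star}\leq L_{\A'}$; together with $L_{\A'}\leq L_{\A^\star}$ this gives $L_{\A'}=L_{\A^\star}$. Hence $\A'$ attains the same minimal loss over the feasible region, i.e. $\A'\in\text{argmin}_{\A:|\A|/|\w|\leq\epsilon}L_\A$, so $\A'$ is one of the solutions enumerated in step (I) of \eqref{eq:A pareto}.

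Finally I would derive the contradiction. Since $L_{\A'}=L_{\A^\star}$, the strict inequality in the domination must occur in the cardinalities, forcing $|\A'|<|\A^\star|$. But $\A'$ is an admissible candidate in the inner set of \eqref{eq:A pareto}, over which $\A^\star$ was selected precisely to minimize $|\A|$, so $|\A^\star|\leq|\A'|$ --- contradicting $|\A'|<|\A^\star|$. Therefore no dominating $\A'$ exists and $\A^\star(\epsilon)$ is Pareto optimal. The only genuinely delicate point is the feasibility step; everything else is a direct consequence of the two-stage lexicographic definition in \eqref{eq:A pareto}, and no property of $L_\A$ beyond its role as the minimized objective is needed.
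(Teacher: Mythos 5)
Your proof is correct and follows essentially the same route as the paper's: a contradiction argument in which the dominating set $\A'$ is shown to be feasible, to achieve the same minimal loss (hence to belong to $\text{argmin}_{\A:|\A|/|\w|\leq\epsilon}L_\A$), and then to violate the cardinality-minimality in step (II) of \eqref{eq:A pareto}. The only difference is presentational --- you spell out the feasibility step $|\A'|\leq|\A^\star|\leq\epsilon|\w|$ explicitly, which the paper leaves implicit by citing the proof of \Cref{thm:not Pareto}.
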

\begin{proof}
Similar to the proof of \Cref{thm:not Pareto}, if $\exists\A^\prime$ that dominates $\A^\star$, then $L_{\A^\prime}=L_{\A^\star}$ and $|\A^\prime|<|\A^\star|$. Hence $\A^\prime$ belongs to the solution set, which contradicts that $\A^\star$ has the smallest $|\A|$ among all solutions.
\end{proof}


\begin{remark}
\Cref{thm:not Pareto} and \Cref{thm:yes Pareto} hold without requiring convexity. Note each $\epsilon$ yields one $\A^\star(\epsilon)$ on the Pareto frontier, hence one can sweep through $0\leq\epsilon\leq 1$ to get different Pareto optimal solutions. 
\end{remark}

\subsection{Gradient descent for inner minimization}
In this section, we provide a closed-form approximation for the inner minimization problem of \eqref{eq:eps-constraint method}, i.e. $L_\A:=\min_{\w_{(k)}\in \A} L(\w)$, from an iterative perspective. Algorithmically, the inner minimization is solved iteratively via gradient descent, such as SGD and Adam. For example, for $1\leq t\leq T$,
\begin{align}
\begin{split}
\w_{(k),t+1}=\w_{(k),t}-\eta I_{(k)}\g_{(k),t}:=\w_{(k),t}-\eta\mathbb{I}(\w_{(k)}\in\A)\g_{(k),t}
\end{split}
\label{eq:peft gd}
\end{align}
in which the binary mask $\mathbb{I}(\w_{(k)}\in\A)$ assigns 0 to frozen groups and 1 to active groups.
Note \eqref{eq:peft gd} recovers FMT: $\w_{t+1}=\w_t-\eta\g_t$ since $\mathbb{I}(\w_{(k)}\in\w)\equiv 1$.

As a consequence, the gradient descent translates the inner minimization $\min_{\w_{(k)}\in\A}L(\w)$ to another minimization over $\min_{\eta\in\R}L(\w_T)$. This changes the high-dimensional parameter optimization to a one-dimensional hyperparameter optimization:
\begin{align}
L_\A&\approx 
\min_{\eta} L(\w_{T})  \text{ s.t. }\eqref{eq:peft gd}
\label{eq:main question2}
\end{align}
Furthermore, the optimization problem \( \min_{\w} L(\w) \) is unconstrained, whereas the optimization $\min_{\eta \text{ s.t. } \eqref{eq:peft gd}} L(\w_T(\eta))$ is restricted to the gradient descent path \( \{\w_t\}_t \) governed by the update rule \eqref{eq:peft gd}. We know $\min_{\eta \text{ s.t. } \eqref{eq:peft gd}} L(\w_T(\eta))\geq\min_{\w} L(\w)$, with the equality holding if and only if a global minimizer of \( L \) lies on the path \( \{\w_t\}_t \).

To enjoy a closed-form approximation, we study the inner minimization from a local perspective (i.e. one iteration at a time), \eqref{eq:main question2} is equivalent to
\begin{align}
L_\A\approx&\sum_{t=1}^{T-1}\min_{\eta} [L(\w_{t+1};\eta)-L(\w_{t})] \text{ s.t. }\eqref{eq:peft gd}
\label{eq:main question3}
\end{align}
up to a constant $L(\w_0)$. At each iteration,  by only updating one parameter group (say $\A=\{\w_{(k)}\}$) and freezing all the others, the loss reduction from this $\w_{(k)}$ is
\begin{align}
&L(\w_{t+1};\eta)-L(\w_t)
\approx\Delta L_{(k),t}(\eta):= -\eta\G_{(k),t}^\top \g_{(k),t}+\frac{\eta^2}{2}
\g_{(k),t}^\top\H_{(k),t}\g_{(k),t}
\label{eq:lr parabola}
\end{align}
We note that the second-order Taylor approximation is reasonably accurate, since the approximation error is $o(\eta^2)$ and hence negligible in practice (c.f. Figure 2 in \cite{bu2024automatic}).

Therefore, if $\g_{(k)}^\top\H_{(k)}\g_{(k)}>0$, updating the $k$-th parameter group can minimize \eqref{eq:lr parabola} to $-\frac{\left(\G_{(k)}^\top \g_{(k)}\right)^2}{\g_{(k)}^\top\H_{(k)}\g_{(k)}}$ under $\eta=\frac{\G_{(k)}^\top \g_{(k)}}{\g_{(k)}^\top\H_{(k)}\g_{(k)}}$. Extending to multiple parameter groups, we write the total loss reduction as
\begin{align}
L_\A\approx\sum_{k,t} \Delta L_{(k),t}=-\sum_k\mathbb{I}(\w_{(k)}\in\A)\cdot\sum_t\frac{\left(\G_{(k),t}^\top \g_{(k),t}\right)^2}{\g_{(k),t}^\top\H_{(k),t}\g_{(k),t}}
\label{eq:main question5}
\end{align}
which explicitly replaces the inner minimization problem of \eqref{eq:main question3}.

\subsection{Knapsack problem}
All in all, we transform the meta-minimization in \eqref{eq:eps-constraint method} to a subset maximization problem in \eqref{eq:main question4},
\begin{align}
\min_{\A} \left(\min_{\w_{(k)}\in \A} L(\w)\right)
&\approx\max_{\A}\sum_k\mathbb{I}(\w_{(k)}\in\A)\cdot\sum_t\frac{\left(\G_{(k),t}^\top \g_{(k),t}\right)^2}{\g_{(k),t}^\top\H_{(k),t}\g_{(k),t}}
\text{ s.t. } |\A|/|\w|\leq \epsilon
\label{eq:main question4}
\end{align}
Finally, given that $\A$ is only reflected in the binary variable $\mathbb{I}(\w_{(k)}\in\A)$, we can further simplify the subset maximization in \eqref{eq:main question4} to a binary maximization problem:
\begin{align}
\max_{I_k\in\{0,1\}}\sum_k I_k\cdot\sum_t\frac{\left(\G_{(k),t}^\top \g_{(k),t}\right)^2}{\g_{(k),t}^\top\H_{(k),t}\g_{(k),t}} \text{ s.t. } \frac{\sum_k I_k|\w_{(k)}|}{\sum_k |\w_{(k)}|}\leq \epsilon
\label{eq:binary question}
\end{align}

Importantly, \eqref{eq:binary question} is essentially the 0-1 knapsack problem --- a classical NP-complete combinatorial problem.

\begin{shadequote}
Given a set of items, each with a weight $W_k$ and a value $V_k$, the knapsack problem determines which items to include so that the total value is maximized within the total weight limit $\mathcal{W}$:
\begin{align}
    \max_{I_k\in \{0, 1\}} \sum_{k} V_k \times I_k \text{ s.t. } \sum_{k} W_k\times I_k \leq \mathcal{W}
\end{align}
where the $k$-th item is included if and only if $I_k=1$.
\end{shadequote}
In our analysis of PEFT, we view each parameter group as an item, with parameter count as weight $W_k:=|\w_{(k)}|$ and loss reduction as value $V_k:=\sum_t\frac{\left(\G_{(k),t}^\top \g_{(k),t}\right)^2}{\g_{(k),t}^\top\H_{(k),t}\g_{(k),t}}$, under the limit $\mathcal{W}:=\epsilon\sum_k|\w_{(k)}|$.

With hindsight, \eqref{eq:binary question} is equivalent to applying $\epsilon$-constraint method to the following multi-task optimization,
\begin{align}
\max_{\A}\sum_k \mathbb{I}(\w_{(k)}\in\A)\cdot\sum_t\frac{\left(\G_{(k),t}^\top \g_{(k),t}\right)^2}{\g_{(k),t}^\top\H_{(k),t}\g_{(k),t}} \quad\&\quad \min_{\A}\frac{\sum_k \mathbb{I}(\w_{(k)}\in\A)\cdot|\w_{(k)}|}{\sum_k |\w_{(k)}|}.
\label{eq:binary question MTO}
\end{align}

\section{Algorithms}
We discuss two classes of algorithms -- one to efficiently compute the objectives of the knapsack problem, and the other to actually solve the knapsack problem.

\subsection{Efficiently computing loss reduction without back-propagation}
\label{sec:efficiency}
We propose \Cref{alg:autoSGD} to efficiently compute the loss reduction, which requires the knowledge of $\G_{(k)}^\top\g_{(k)}$ and $\g_{(k)}^\top\H_{(k)}\g_{(k)}$. We use two techniques, (I) quadratic curve fitting and (II) lazy updating.

\begin{algorithm}[!htb]
\caption{Hessian-informed selection of parameter groups (iteration $t$)}
\begin{algorithmic}[1]
    \State Compute loss $L_0 = L(\w_t)$ by forward pass
    \State Compute gradient $\g:=\{\g_{(k)}\}_k$ on $L_0$ by SGD, AdamW, etc.
    \If{$t \text{ mod } 4K==0$}
    \For{$k\in 1,...,K$}
    \For{$\hat\eta_j\in\{-2,-1,1,2\}\cdot\eta_t$}
    \State Compute $L_j = L\left(\w_t - \hat\eta_j\bm{e}_k\odot\g\right)$ by forward pass
    \EndFor
    \State Fit a quadratic curve from $\{\hat\eta_j\} \rightarrow \{L_j - L_0\}$
    \State Derive $\G_{(k),t}^\top \g_{(k),t}$ and $\g_{(k),t}^\top \H_{(k),t} \g_{(k),t}$ in \eqref{eq:lr parabola}
        \If{$\G_{(k),t}^\top \g_{(k),t}>0, \g_{(k),t}^\top \H_{(k),t} \g_{(k),t}>0, \text{R2 score}>0.99$}
\State Accumulate the loss reduction
$V_k = V_k+ \frac{|\G_{(k),t}^\top\g_{(k),t}|^2}{\g_{(k),t}^\top\H_{(k),t}\g_{(k),t}}$
        \State Derive optimal learning rates $\eta_{(k)}^*=\frac{\G_{(k),t}^\top\g_{(k),t}}{\g_{(k),t}^\top\H_{(k),t}\g_{(k),t}}$ 
        \EndIf
\EndFor
\EndIf

    \State Update the parameters: 
    $\w_{(k),t+1} = \w_{(k),t} - \eta_{(k)}^*\g_{(k),t}$
\end{algorithmic}
\label{alg:autoSGD}
\end{algorithm}
\paragraph{Quadratic curve fitting.}
We fit the quadratic function \eqref{eq:lr parabola} by using multiple forward passes at different learning rates to get different $\Delta L_{(k)}(\eta)$. Next, we directly find the two scalars $\G_{(k)}^\top\g_{(k)}$ and $\g_{(k)}^\top\H_{(k)}\g_{(k)}$ via a finite-sum problem:
$$\g_{(k)}^\top\H_{(k)}\g_{(k)},\G_{(k)}^\top\g_{(k)}\approx\text{argmin}_{A,b} \sum_j \left(\Delta L_{(k)}(\hat\eta_j)+\hat\eta_j b-\frac{\hat\eta_j^2}{2}A\right)^2$$
where $\hat\eta_j\in \{-2\eta,-\eta,0,\eta,2\eta\}$. This back-propagation-free approach from \cite{bu2024automatic} has minimal memory overhead, as it does not instantiate 
$\G_{(k)}$ or $\H_{(k)}$. 

\paragraph{Lazy updating.}
We need $4K+1$ forward passes, hence incurring $O(K)$ training time overhead if implemented naively. This overhead can be reduced significantly if the loss reduction is computed infrequently, following \cite{bu2024automatic}. In practice, we update every $O(K)$ iterations so that the overhead is $O(1)$ and roughly negligible.

\subsection{Solving 0-1 knapsack problem}
\paragraph{Exhaustive search}
To obtain the Pareto optimal solution of \eqref{eq:binary question MTO}, we need to exhaustively search all $2^K$ subsets and compute the value $\sum_k\mathbb{I}(\w_{(k)}\in\A)\cdot V_k$ and the weight $\sum_k\mathbb{I}(\w_{(k)}\in\A)\cdot W_k$ for each subset. Then we find all subsets with $|\A|/|\w|\leq \epsilon$ and select the subset with the largest value. This algorithm guarantees to find a Pareto optimal minimizer by \Cref{thm:yes Pareto}, but it has an exponential complexity in terms of $K$.

\paragraph{Approximate solutions}
In practice, we turn to approximate solutions like the greedy approximation \cite{martello1990knapsack} to solve \eqref{eq:binary question} instead of directly \eqref{eq:binary question MTO}. We define the value-to-weight ratio as \textit{Per-Parameter Influence} (PPI) and its accumulation where
\begin{align}
\text{PPI}_{k}(t)= \frac{(\G_{(k),t}^\top\g_{(k),t})^2}{\g_{(k),t}^\top\H_{(k),t}\g_{(k),t}\cdot |\w_{(k)}|},
\quad \text{APPI}_{k}(\tau)= \sum_{t=1}^\tau \text{PPI}_{k}(t)
\label{eq:PPI}
\end{align}
is computed by \Cref{alg:autoSGD}.

Next, we (I) sort $\{\w_{(k)}\}_k$ by $\text{APPI}_{k}$ in descending order; (II) output $K$ subsets $\{\A_1,...\A_K\}$, with the $k$-th subset containing the first $k$ parameter groups. 

As a result, we view $\A_k$ as the solution to \eqref{eq:binary question} for any $\epsilon\in \left[\frac{|\A_k|}{|\w|},\frac{|\A_{k+1}|}{|\w|}\right)$\footnote{Alternatively, we may use dynamic programming or meet-in-the-middle algorithm algorithm to solve \eqref{eq:binary question} exactly. However, each of these algorithms gives one minimizer of \eqref{eq:binary question}, which is not guaranteed to be Pareto optimal in terms of \eqref{eq:binary question MTO} by \Cref{thm:not Pareto}, if the problem has multiple minimizers. Furthermore, to estimate the Pareto frontier, these algorithms need to be applied multiple times, for different $\epsilon$'s, whereas the greed approximation is applied once as it is $\epsilon$-independent.}, and as an approximately Pareto optimal solution to \eqref{eq:binary question MTO}. Rigorously speaking, by \Cref{thm:not Pareto}, $\A_k$ is not guaranteed to be Pareto optimal, yet it provides empirical guidance for adaptive PEFT design later in \Cref{sec:adapeft}.

\section{Visualization of parameter group influence}
In this section, we visualize PPI and APPI of \eqref{eq:PPI} in training. We experiment with multiple tasks (image classification, natural language understanding, and generation), model architectures (Vision Transformer or ViT \cite{dosovitskiy2020image}, T5 \cite{raffel2020exploring}, RoBERTa \cite{liu2019roberta}, GPT \cite{radford2019language}) and model sizes ($\sim 0.1 - 1B$). We partition models into parameter groups that are fine-tuned by LoRA (\cite{hu2022lora}; with module names \textit{lora\_A} and \textit{lora\_B}), BitFit (\cite{zaken2022bitfit}; with name \textit{bias}), linear probing (with name \textit{head}), LayerNorm tuning (\cite{zhaotuning}; with name \textit{norm}), and embedding layer tuning (with name \textit{embed}).

We generate two types of figures: (I) heatmap of PPI for different parameter groups at different iterations, i.e. $\text{PPI}_{k}(t)$, where the lighter color indicates a stronger influence; (II) line plot of accumulative PPI, i.e. $\text{APPI}_{k}(\tau)\approx L(\w_0)-L(\w_{\tau})$. We leave more details in \Cref{app:visual}.

We consistently observe that some parameter groups are highly influential, with $\approx 10^4\times$ higher PPI than the majority of model parameters. This observation supports the effectiveness of PEFT, only if we actually select the influential parameters.

\subsection{Discrepancy across models and tasks}
In \Cref{fig:heatmap all} and \Cref{fig:heatmap model size}, we observe that the influence of parameter groups varies significantly across models and tasks.  In summary, none of the methods is performant in all scenarios, which motivates an adaptive PEFT design in \Cref{sec:adapeft}.

\paragraph{Varying models and tasks}
In \Cref{fig:heatmap all}, it is clear that PPI patterns are highly dependent on models and tasks, and we can leverage such patterns to shed light on the effectiveness of \textit{new PEFT methods}. In fact, it is no surprise that a combination of multiple PEFT methods can give very strong performance. For example, the LoRA library \cite{hu2022lora} states that \textit{``training bias vectors in tandem with LoRA might be a cost-efficient way to squeeze out extra task performance"}.

\begin{figure}[!htb]
\centering
\begin{subfigure}{0.3\linewidth}
\includegraphics[width=\linewidth]{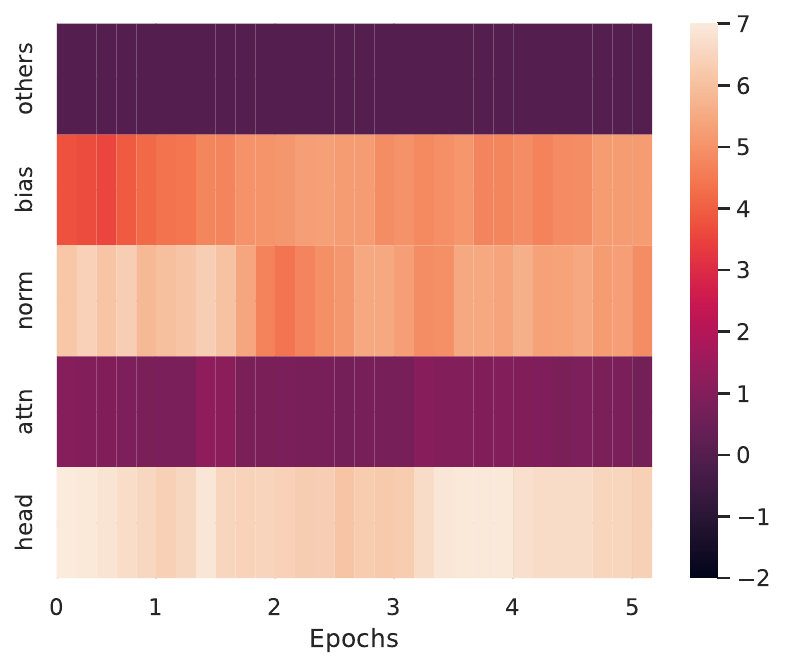}
\caption{CIFAR100, ViT-base, fine-tuning}
\end{subfigure}
\hfill
\begin{subfigure}{0.3\linewidth}
\includegraphics[width=\linewidth]{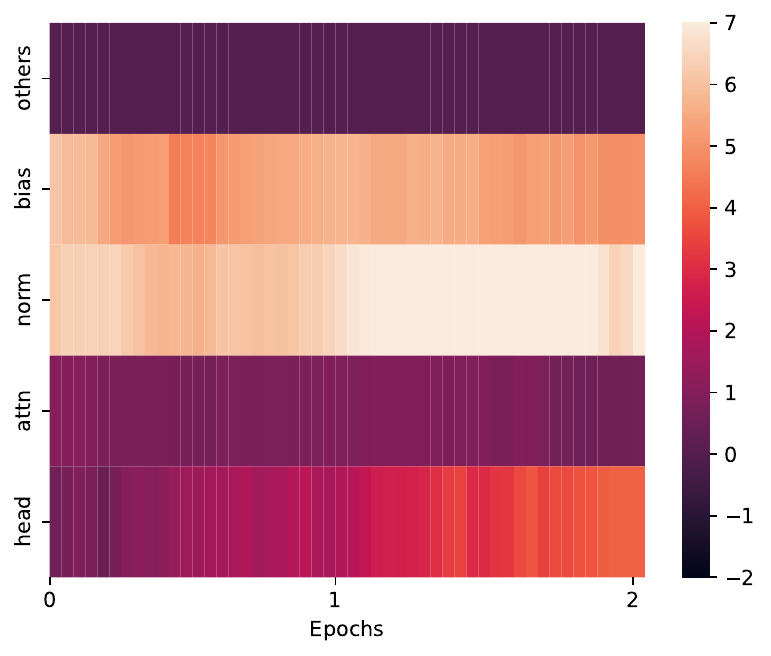}
\caption{ImageNet, ViT-base, fine-tuning\quad\quad\quad\quad\quad\quad}
\end{subfigure}
\hfill
\begin{subfigure}{0.3\linewidth}
\includegraphics[width=\linewidth]{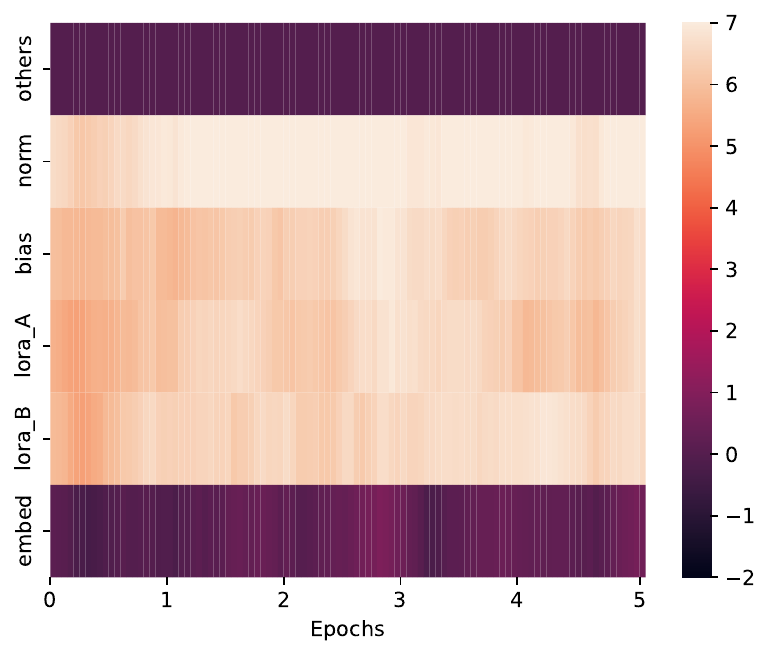}
\caption{E2E, GPT2, fine-tuning\quad\quad\quad\quad\quad}
\end{subfigure}
\begin{subfigure}{0.3\linewidth}
\includegraphics[width=\linewidth]{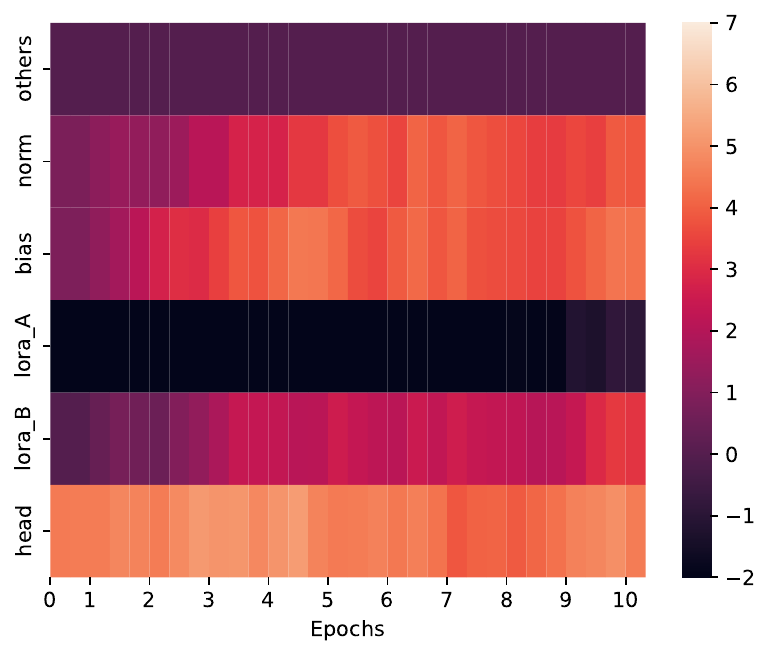}
\caption{MRPC, RoBERTa-base, fine-tuning}
\end{subfigure}
\hfill
\begin{subfigure}{0.3\linewidth}
\includegraphics[width=\linewidth]{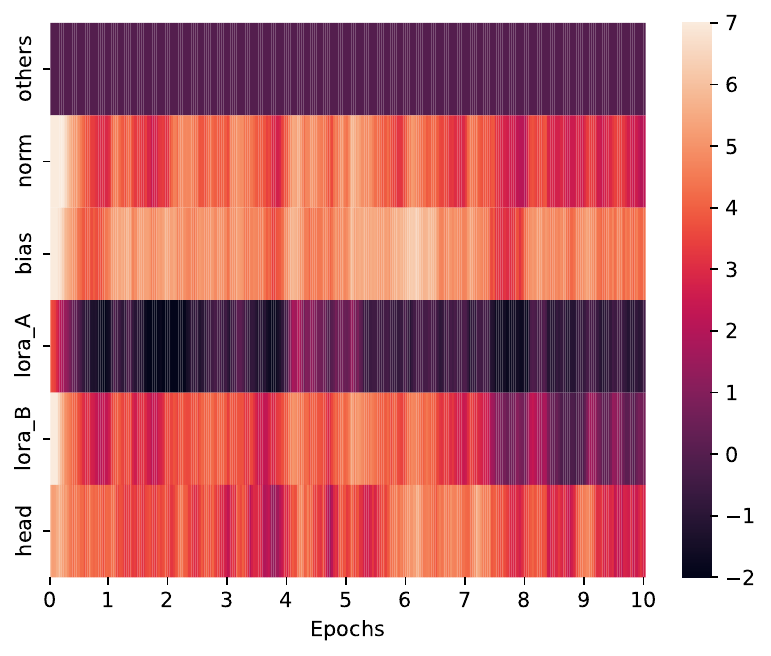}
\caption{SST2, RoBERTa-base, fine-tuning}
\end{subfigure}
\hfill
\begin{subfigure}{0.3\linewidth}
\includegraphics[width=\linewidth]{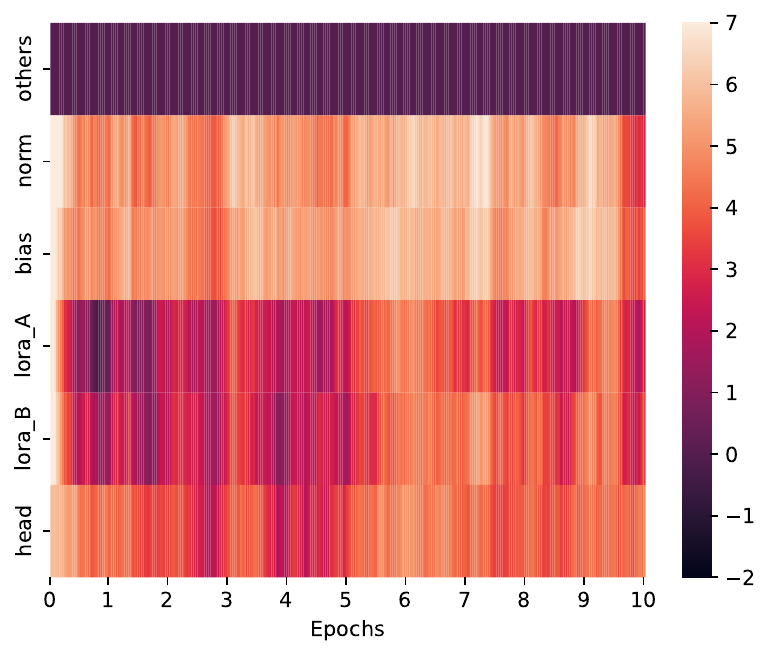}
\caption{SST2, RoBERTa-base, pre-training}
\end{subfigure}
\caption{Heatmap of PPI for multiple parameter groups in log-scale.}
\label{fig:heatmap all}
\end{figure}

For examples in image classification, on CIFAR100, ViT can be effectively trained with a combination of LayerNorm tuning and linear probing, whereas on ImageNet, only LayerNorm tuning may be sufficient. For examples in language modeling,  GPT2 on E2E can benifit from a combination of LoRA, BitFit and LayerNorm tuning; RoBERTa-base can be effectively fine-tuned on MRPC and SST2 by linear probing and BitFit, but it may freeze LoRA A matrix (like in LoRA-FA \cite{zhang2023lora}). In what follows, we demonstrate that PPI can be different by varying only the model or only the task (e.g. dataset or training stage).

\paragraph{Varying tasks with fixed model}
A closer look at \Cref{fig:heatmap all}(e)(f), both training RoBERTa model on SST2 dataset, reveals that PPI can be different at different training stages: lora\_A matrix is less influential in fine-tuning than in pre-training. Additionally, comparing \Cref{fig:heatmap all}(a)(b) or comparing \Cref{fig:heatmap all}(d)(e) reveals that PPI can be different on the same model when the datasets change.

\paragraph{Varying models with fixed task}
In \Cref{fig:heatmap model size}, we compare PPI on T5 and RoBERTa models on the same task. It is clear that different model architectures can have different patterns.

\begin{figure}[!htb]
\centering
\includegraphics[width=0.24\linewidth]{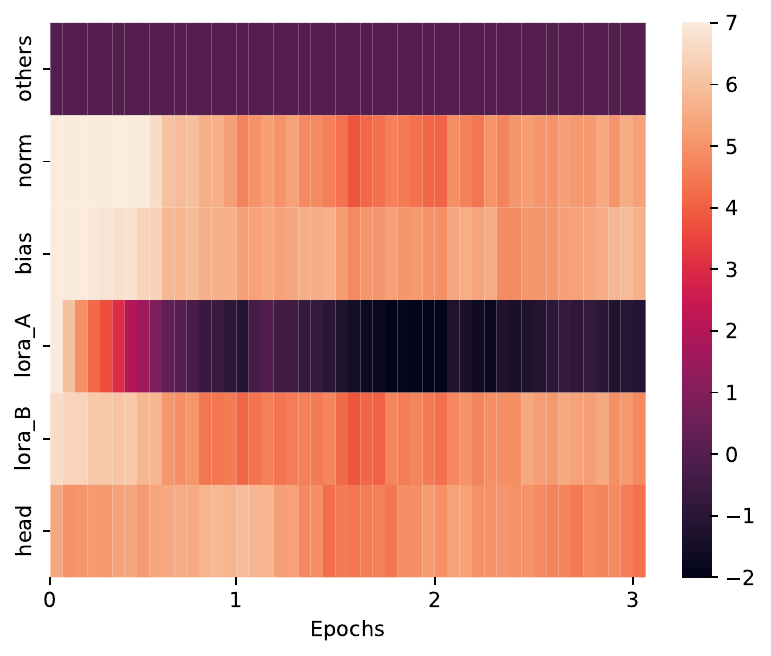}
\includegraphics[width=0.24\linewidth]{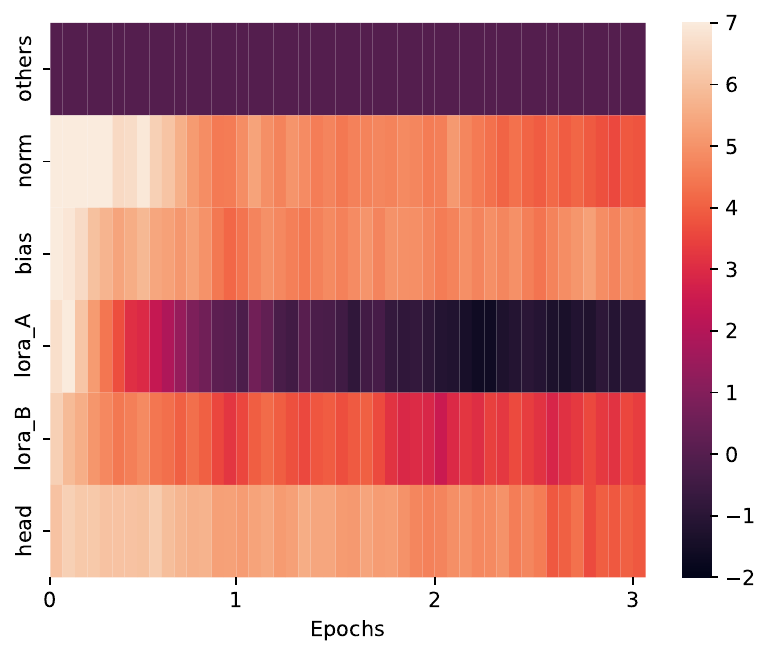}
\includegraphics[width=0.24\linewidth]{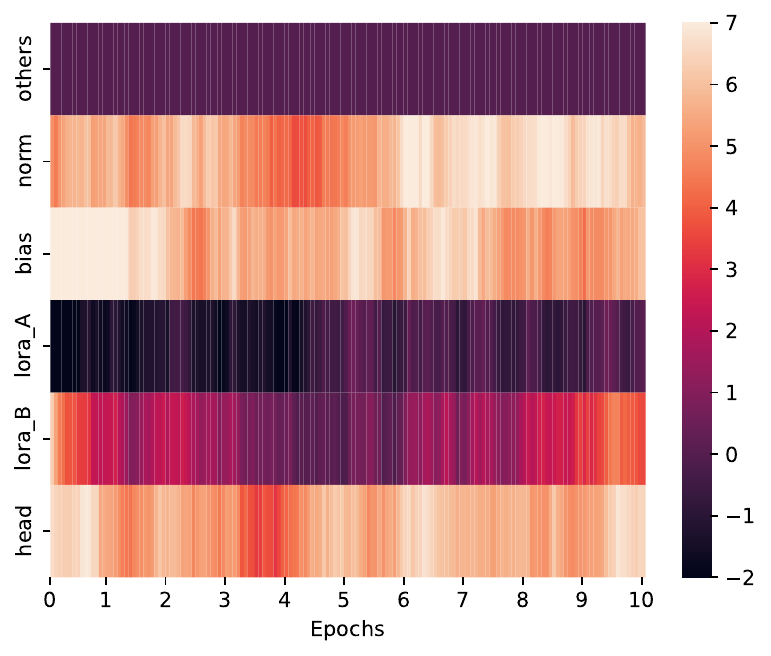}
\includegraphics[width=0.24\linewidth]{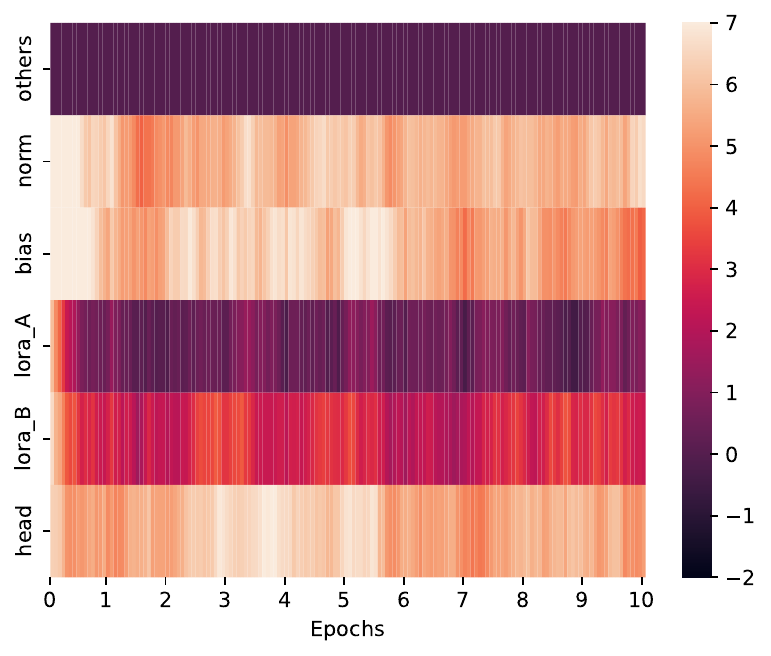}
\caption{Heatmap of PPI on CoLA in log-scale. Left to right: RoBERTa-base, RoBERTa-large, T5-small, and T5-base.}
\label{fig:heatmap model size}
\end{figure}

\subsection{Consistency across training iterations}
We empirically observe a consistent pattern of PPI and APPI across iterations, shortly after the initialization of models. See \Cref{fig:heatmap all} and \Cref{fig:heatmap model size} for PPI, and \Cref{fig:appi} for APPI. Such consistency motivates an efficient strategy to design PEFT: train FMT for some iterations (say 10\% of the full run), determine PEFT based on APPI, then launch the full run with PEFT.

\begin{figure}[!htb]
    \centering
\includegraphics[width=0.19\linewidth]{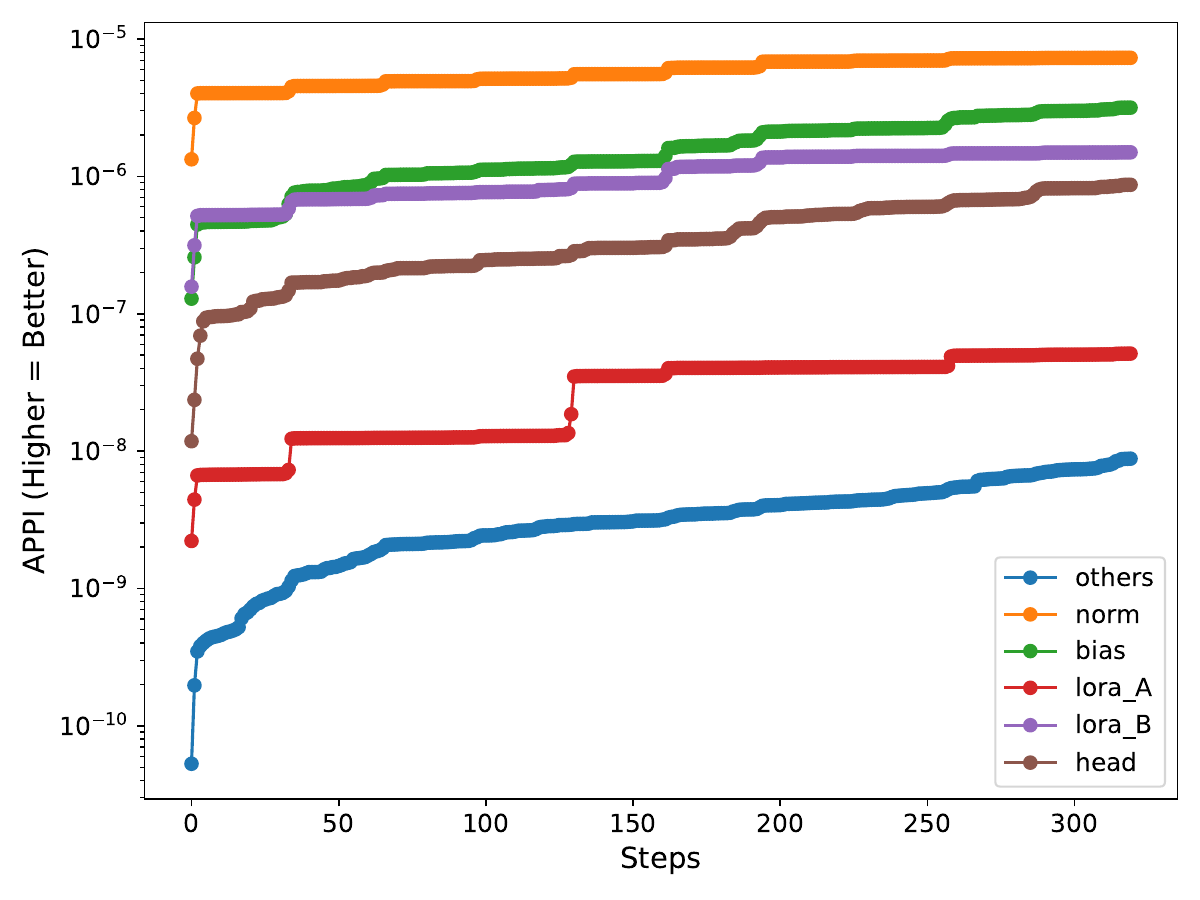}
    \includegraphics[width=0.19\linewidth]{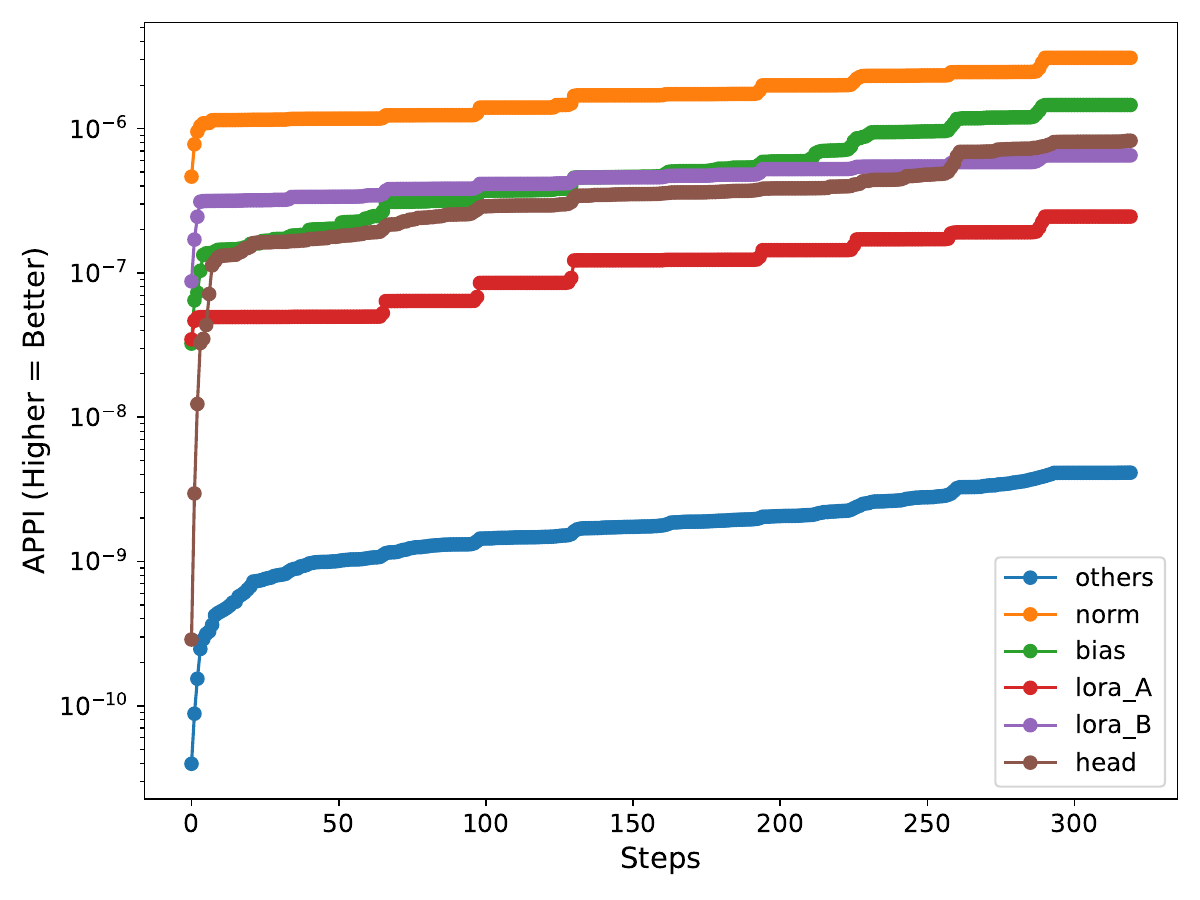}
\includegraphics[width=0.19\linewidth]{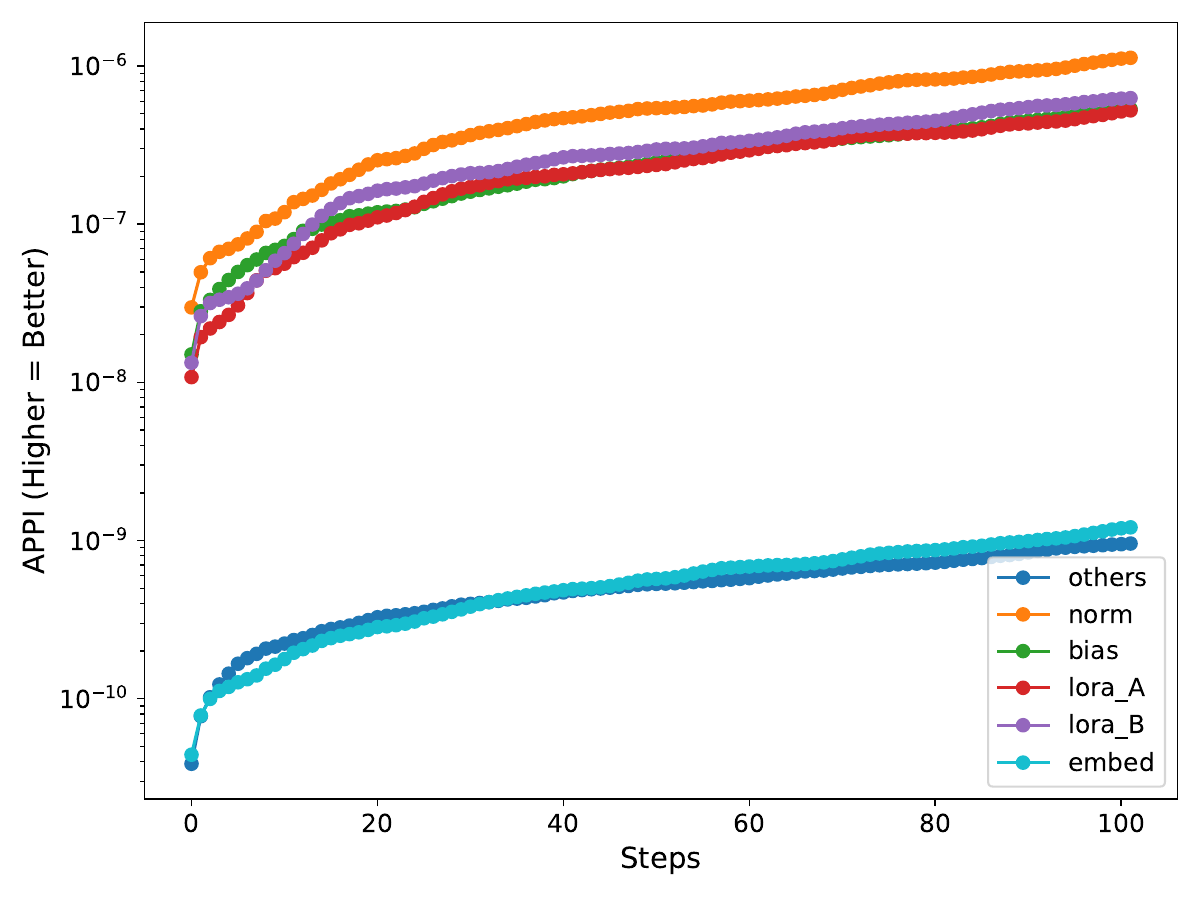}
    \includegraphics[width=0.19\linewidth]{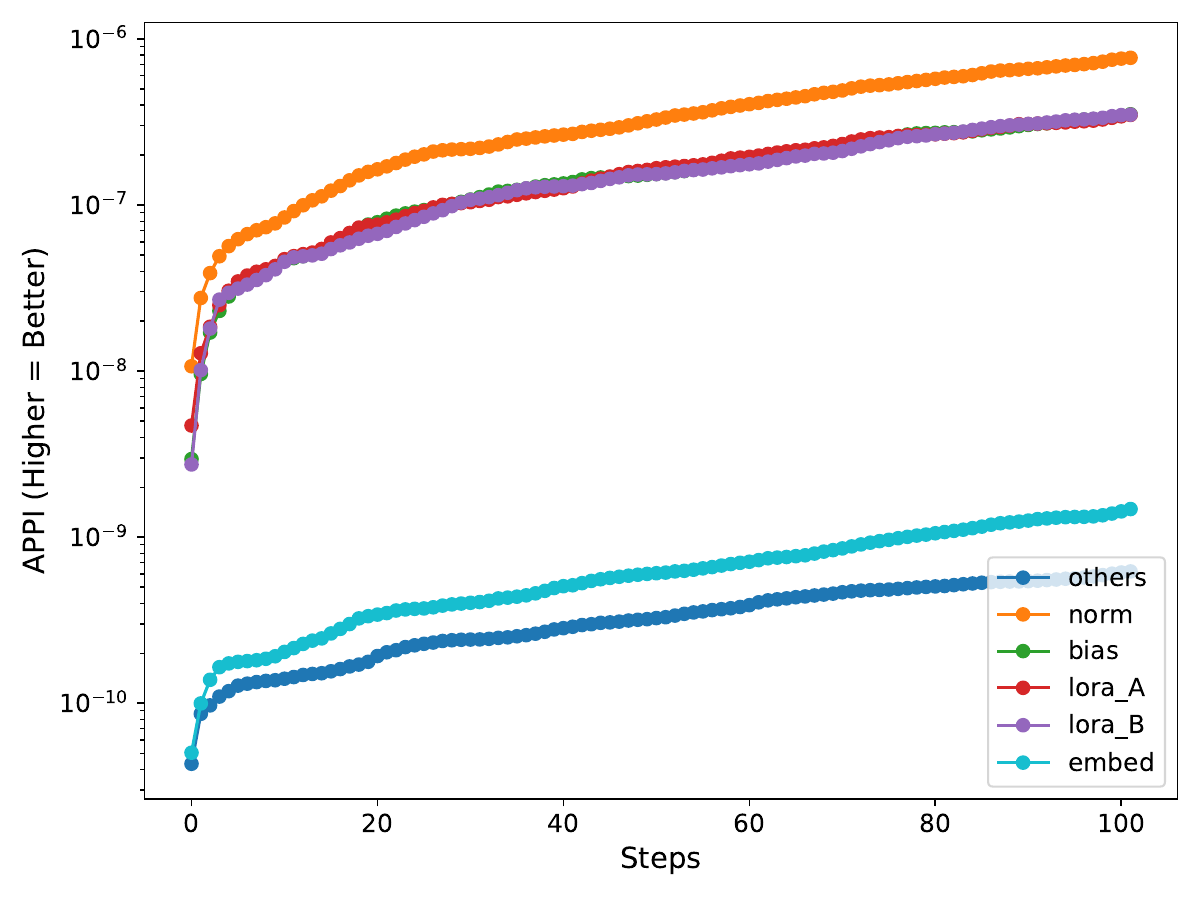}
    \includegraphics[width=0.19\linewidth]{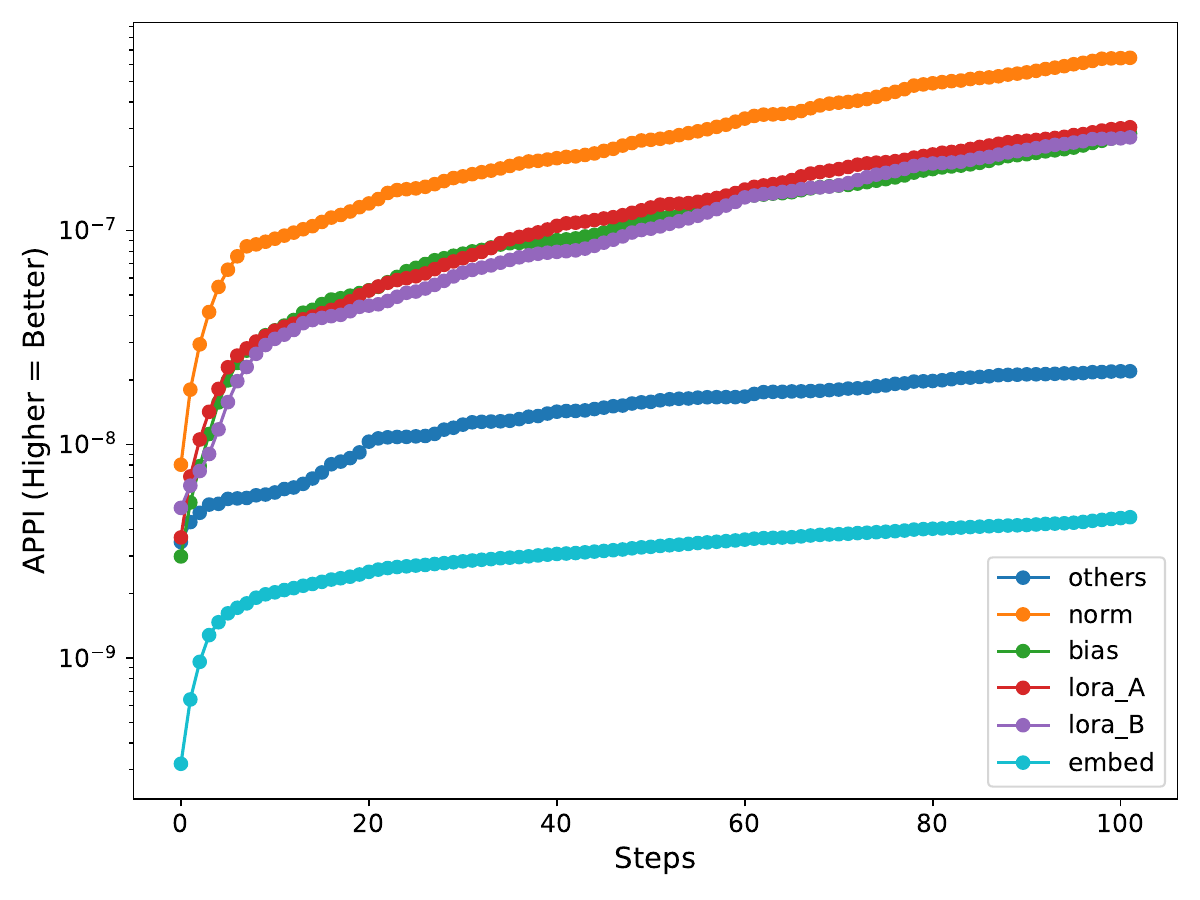}
    \vspace{-0.2cm}
    \caption{Visualization of APPI. Left two on SST2: RoBERTa-base/large. Right three on E2E: GPT2-small/medium/large.
    }
    \label{fig:appi}
\end{figure}

\subsection{Consistency across model sizes}
Furthermore, we observe a roughly consistent pattern across model sizes for the same architecture and task. We vary RoBERTa and T5 sizes in \Cref{fig:heatmap model size}, and GPT2 from small (124M) to large size (0.8B) in \Cref{fig:gpt2}. We additionally vary ViT from tiny (5M) to large size (0.3B) in \Cref{fig:cv} in \Cref{app:exp}. Such observation encourages us to train on small models and directly adopt the optimal PEFT (i.e. the active set of parameter groups) for large models.

\begin{figure}[!htb]
    \centering
    \includegraphics[width=0.24\linewidth]{figs/HMe2e+seed2+gpt2.sm+autolr0.001+optadamw++100.0_nan+heatmap.pdf}
    \includegraphics[width=0.24\linewidth]{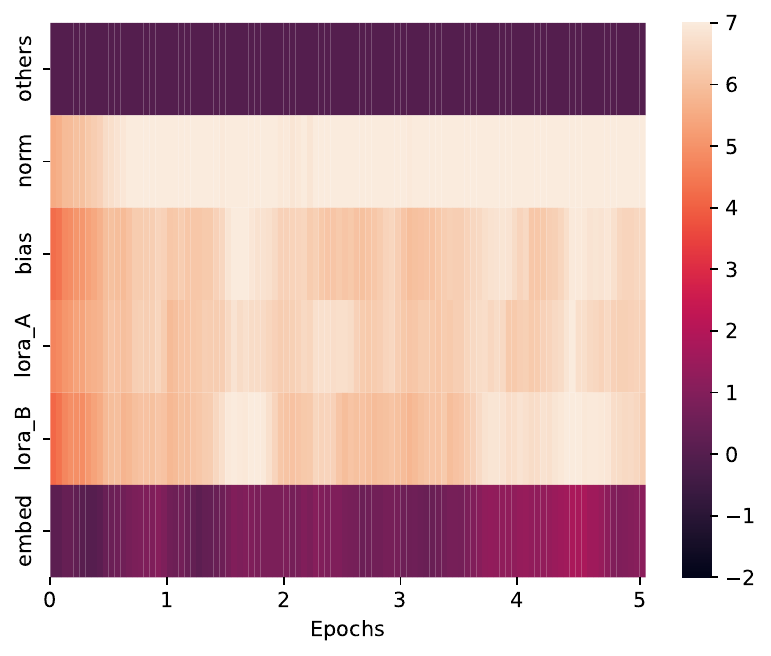}
    \includegraphics[width=0.24\linewidth]{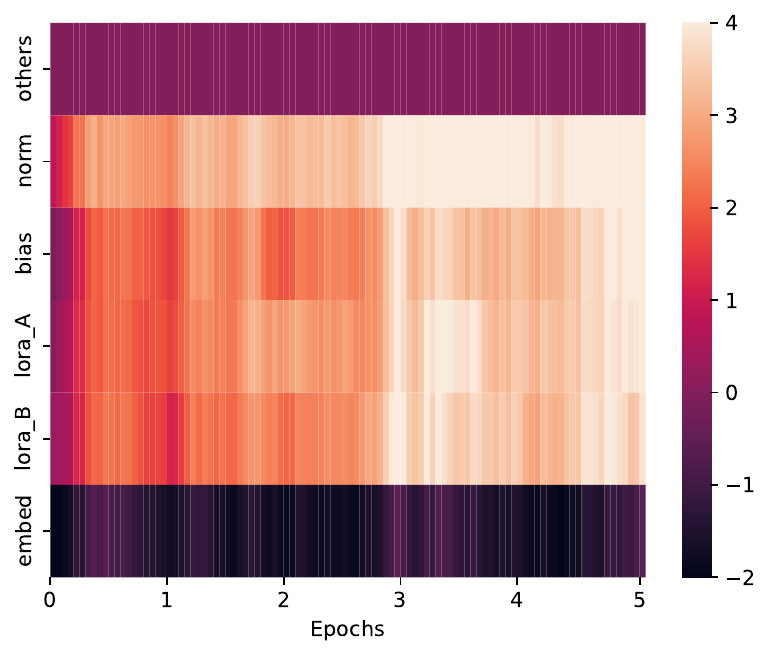}
    \caption{Heatmap of PPI on E2E. Left to right: GPT2 small, medium and large.}
    \label{fig:gpt2}
\end{figure}

\section{Adaptive PEFT framework}
\label{sec:adapeft}
We propose the \textbf{AdaPEFT} framework to adaptively select the trainable parameter groups for PEFT. 

Firstly, we demonstrate that selecting the active set via APPI is approximately Pareto optimal. In \Cref{fig:pareto roberta} (left column), the theoretical Pareto frontier formed by our selection (6 active sets) closely matches that formed by all $2^6=64$ active sets. 

\begin{figure}[!htb]
    \centering
\includegraphics[width=0.27\linewidth]{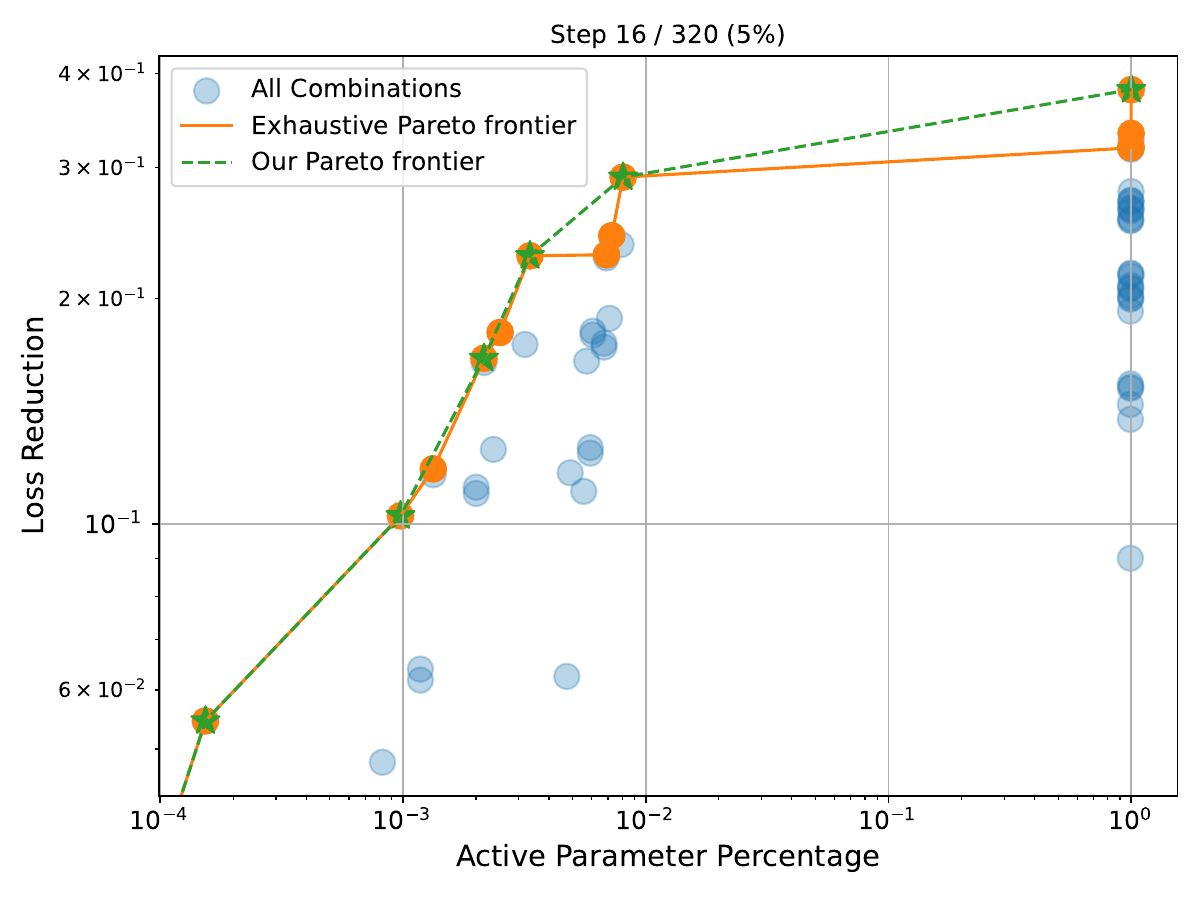}
\includegraphics[width=0.29\linewidth]{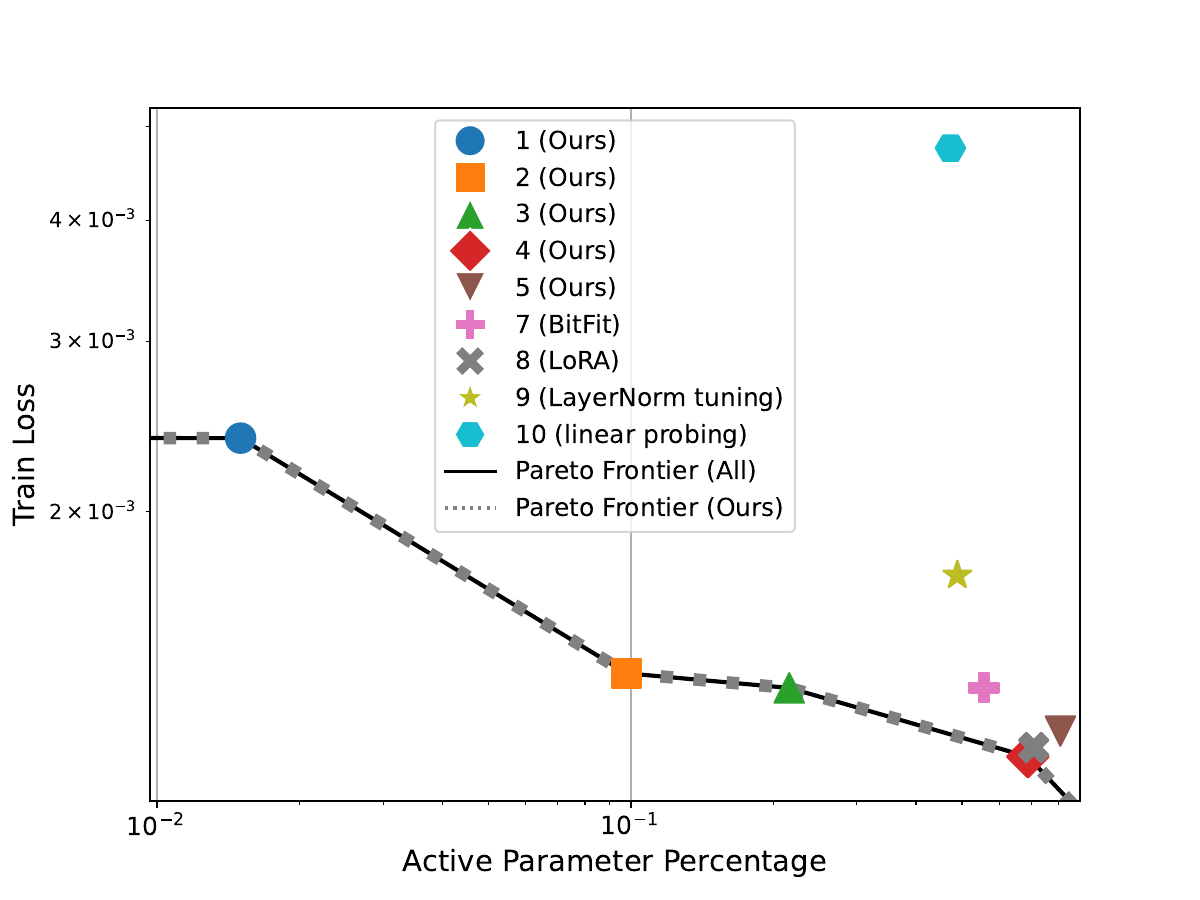}
\includegraphics[width=0.29\linewidth]{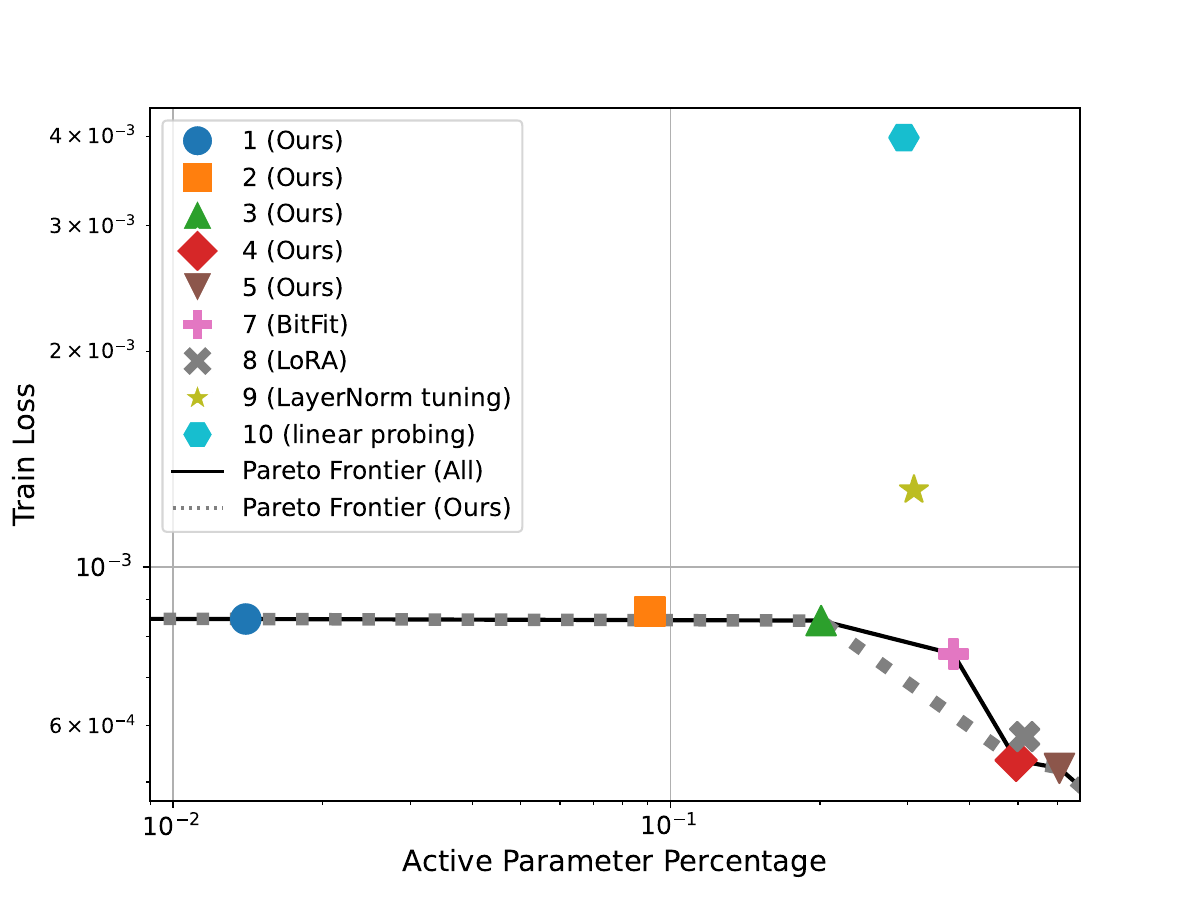}
\\
\includegraphics[width=0.27\linewidth]{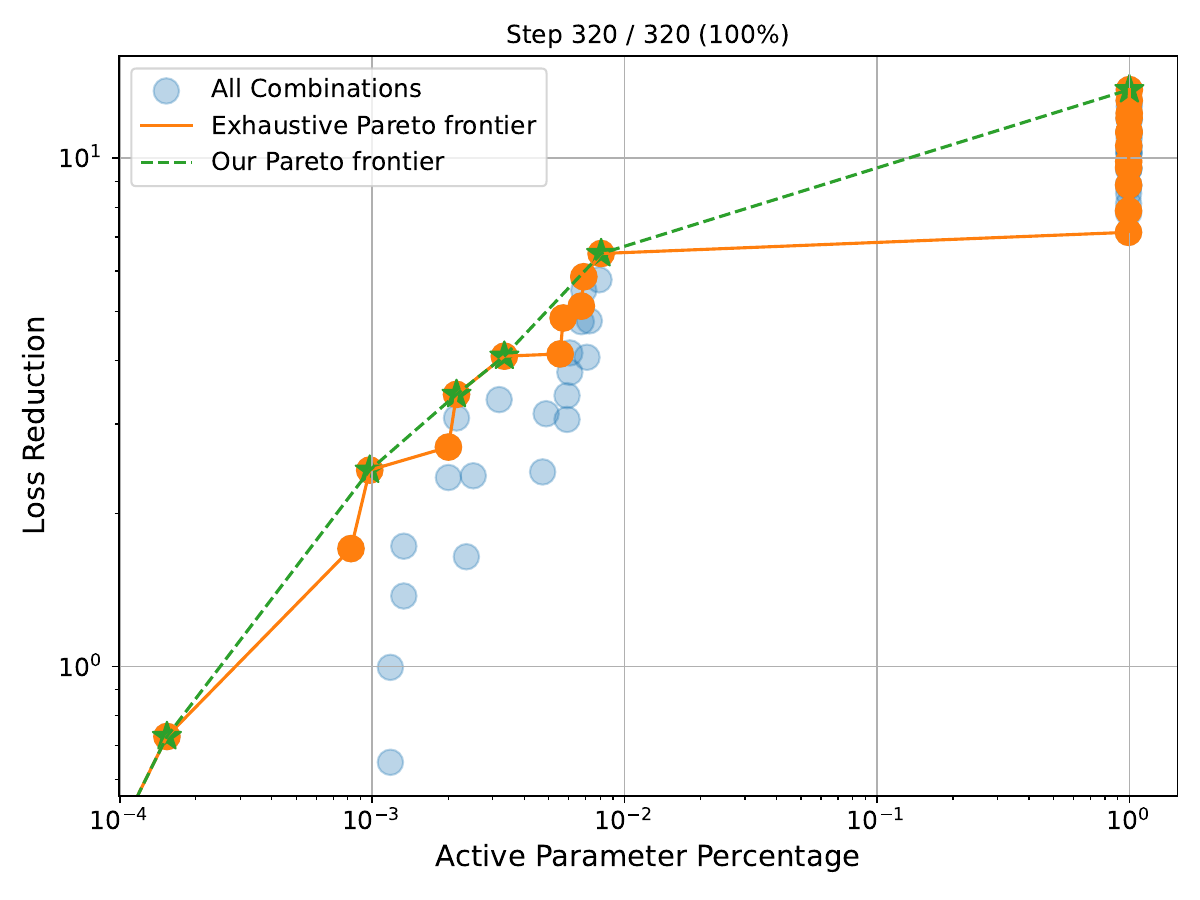}
\includegraphics[width=0.29\linewidth]{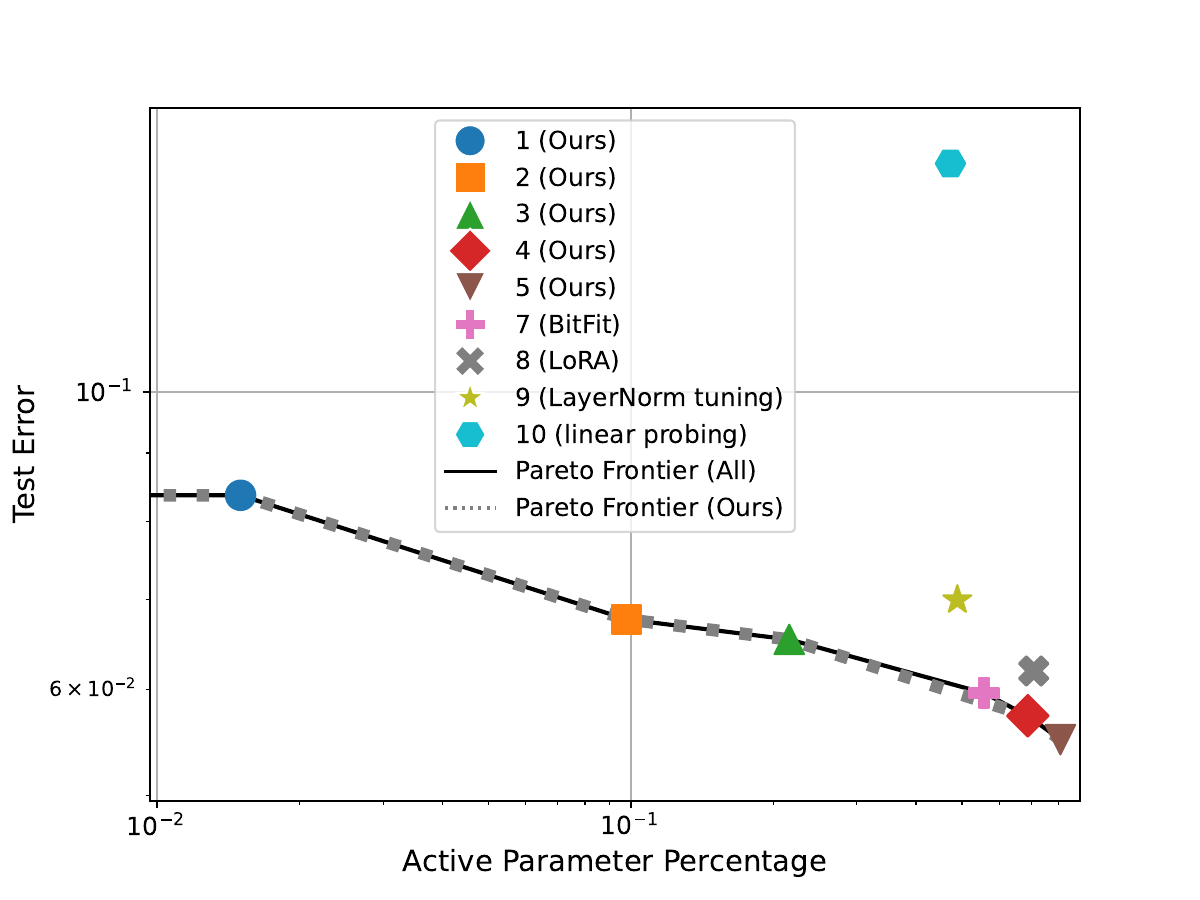}
\includegraphics[width=0.29\linewidth]{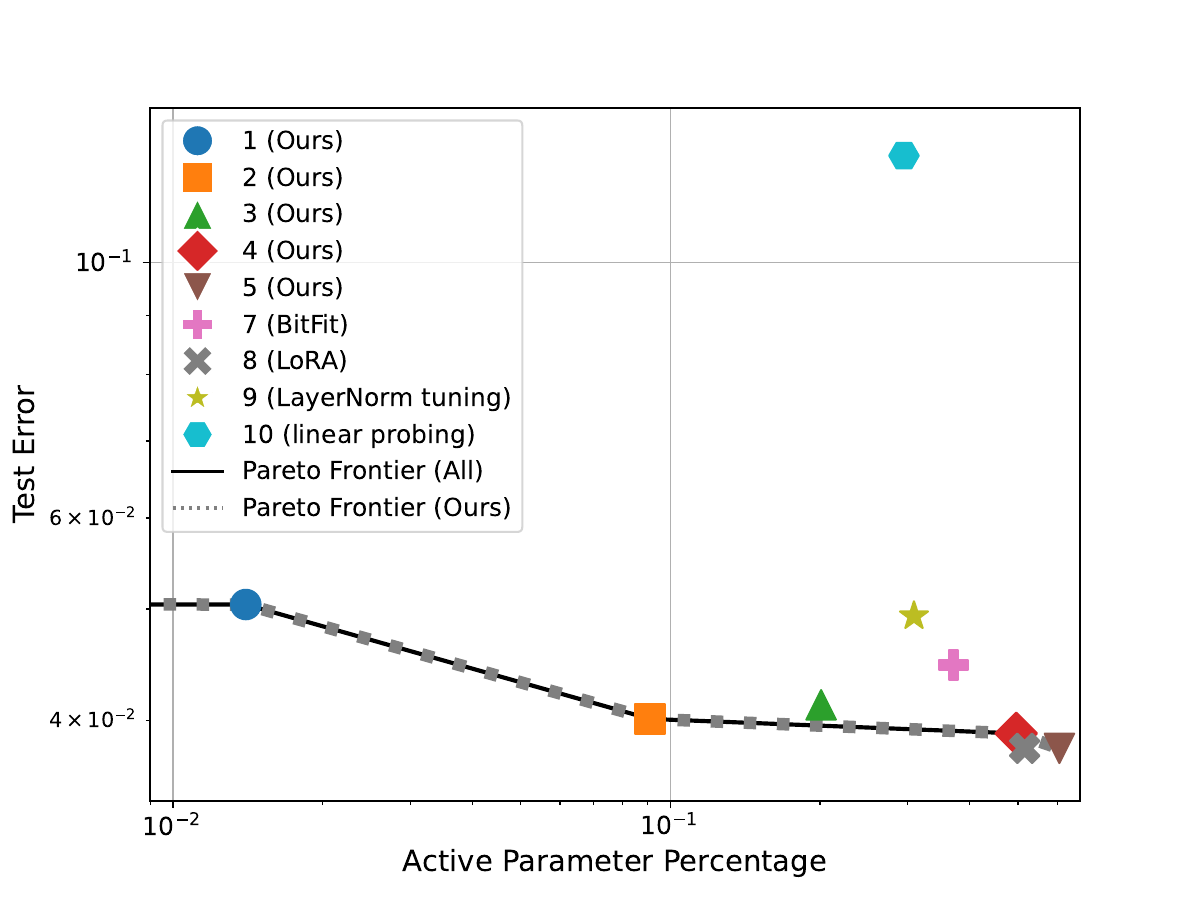}
    \caption{Visualization of Pareto optimality on SST2. Left: theoretical loss reduction of RoBERTa-base via APPI. Middle: actual loss and error of RoBERTa-base. Right: actual loss and error of RoBERTa-large. Each PEFT is indexed in \Cref{tab:RoBERTa PEFT}.}
    \label{fig:pareto roberta}
\end{figure}

Next, we compare our selection (5 active sets excluding FMT) with existing PEFT in terms of actual train loss and test error. In \Cref{fig:pareto roberta} (middle column), our actual Pareto frontier still closely matches the frontier formed by all 10 methods, whereas some existing PEFT methods are far from the frontier, indicating the potential failure of non-adaptive PEFT.

Having validated the (approximate) Pareto optimality of APPI selection, we now give AdaPEFT in \Cref{alg:adapeft}, transferring the active set of parameter groups from (small models, short training) to (large models, long training). Note that AdaPEFT can be implemented with much flexibility, e.g. setting $m=M$ or $10\%\to 100\%$.

\begin{algorithm}[!htb]
\caption{AdaPEFT on model $M$ for $T$ iterations}
\begin{algorithmic}[1]
\State Equip a smaller model $m$ with PEFT components (e.g. LoRA and prefix).
\State Train $m$ with \Cref{alg:autoSGD} under FMT, for 10\% of $T$ iterations.
\State Sort APPI to select influential parameter groups under $|\A|/|\w|\leq\epsilon$ constraint
\State Train $M$ under PEFT with selected $\A$ for $T$ iterations.
\end{algorithmic}
\label{alg:adapeft}
\end{algorithm}

We experiment with \Cref{alg:adapeft} on RoBERTa and GPT2. We select the active sets from small models --  RoBERTa-base and GPT2-small, using 10\% of the training budget, then directly transfer to larger models -- RoBERTa-large, GPT2-medium and GPT2-large. We list the active sets, number of parameters and model utility in \cref{tab:RoBERTa PEFT} and \cref{tab:gpt2 PEFT} (appendix), which are visualized in \Cref{fig:pareto roberta} and \Cref{fig:pareto gpt}.

Our key observation is that (I) the active sets selected by (small model, short horizon) consistently give good Pareto frontier for (large models, long horizon), i.e. AdaPEFT is approximately Pareto optimal and scalable; (II) existing PEFT is either close to our Pareto frontier or far away from the frontier, i.e. AdaPEFT effectively identifies strong PEFT configurations.

\begin{figure}[!htb]
    \centering
\hspace{-0.4cm}
\includegraphics[width=0.29\linewidth]{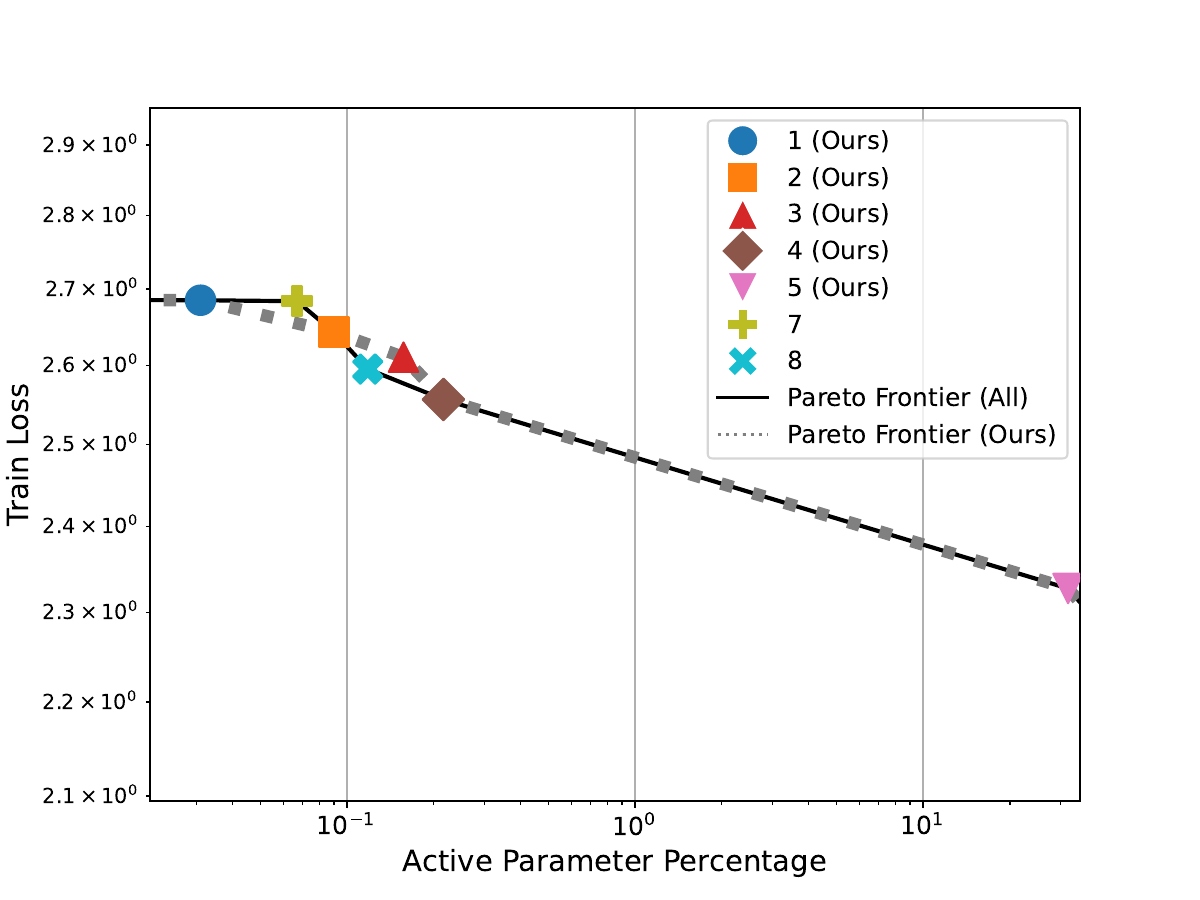}
    \hspace{-0.4cm}
\includegraphics[width=0.29\linewidth]{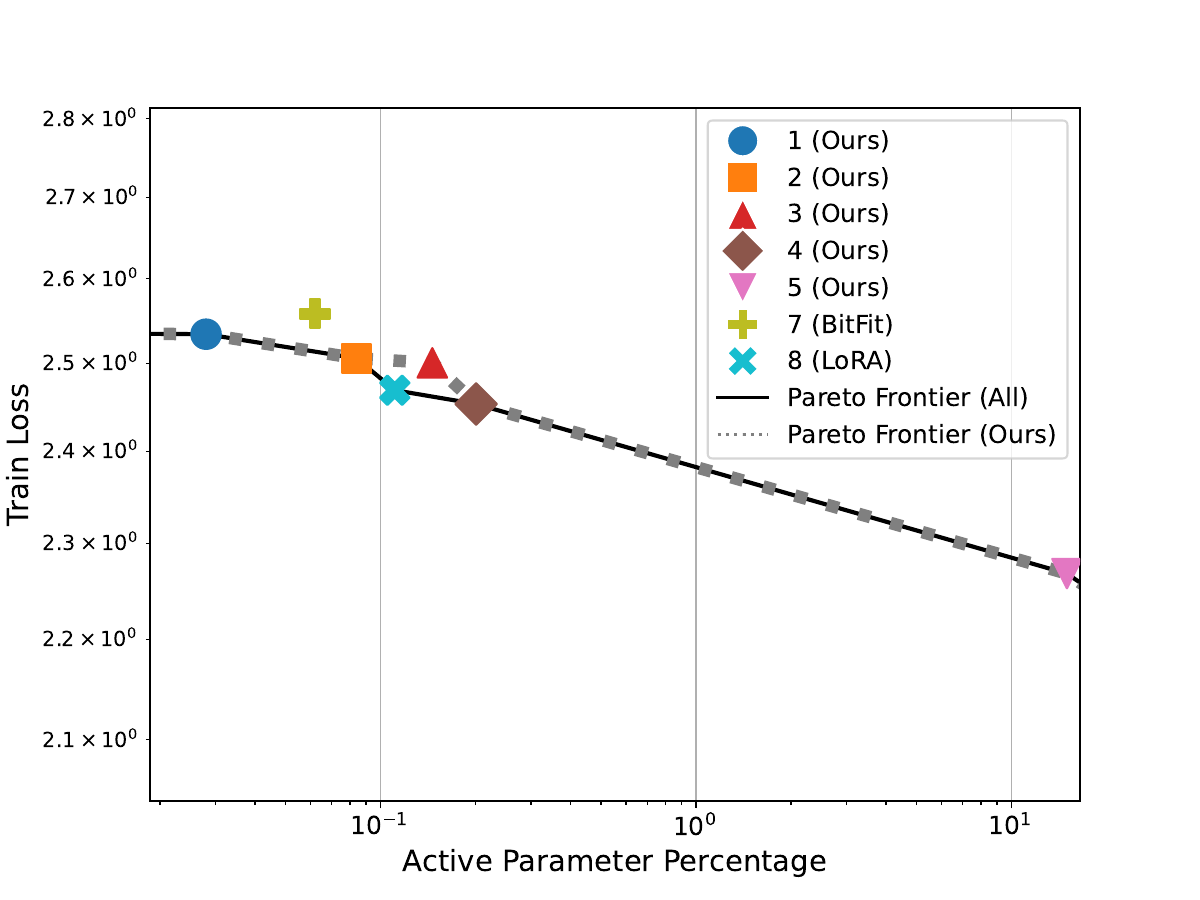}
\hspace{-0.4cm}
\includegraphics[width=0.29\linewidth]{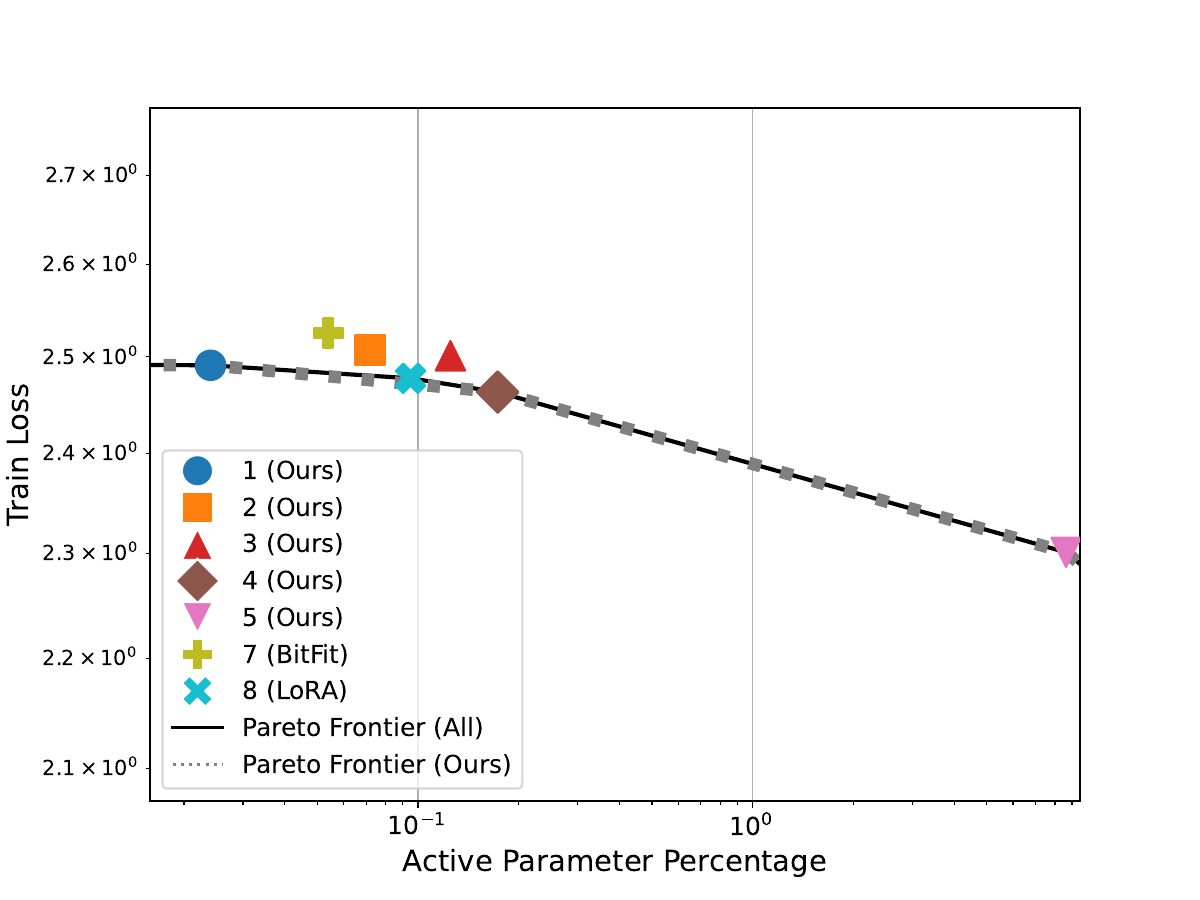}
\\
\hspace{-0.4cm}
\includegraphics[width=0.29\linewidth]{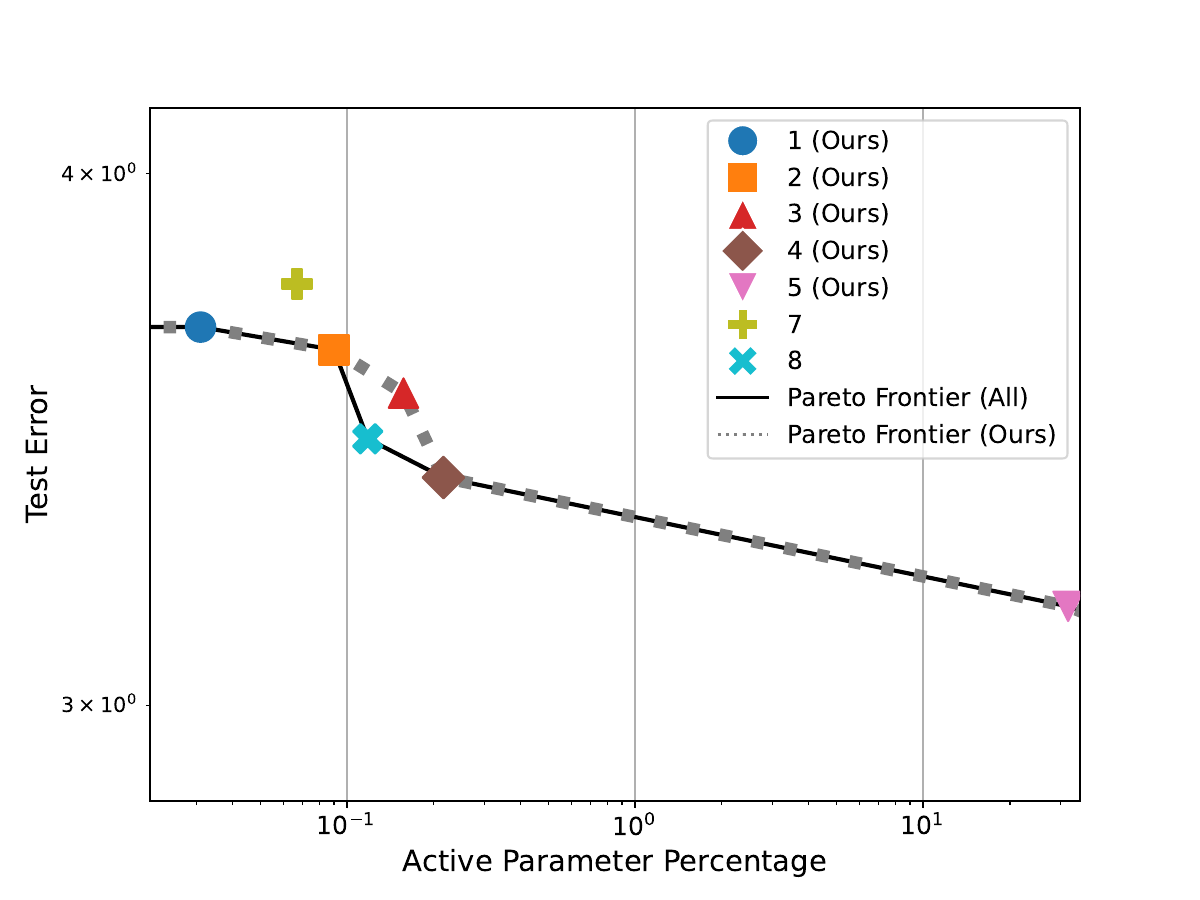}
\hspace{-0.4cm}
\includegraphics[width=0.29\linewidth]{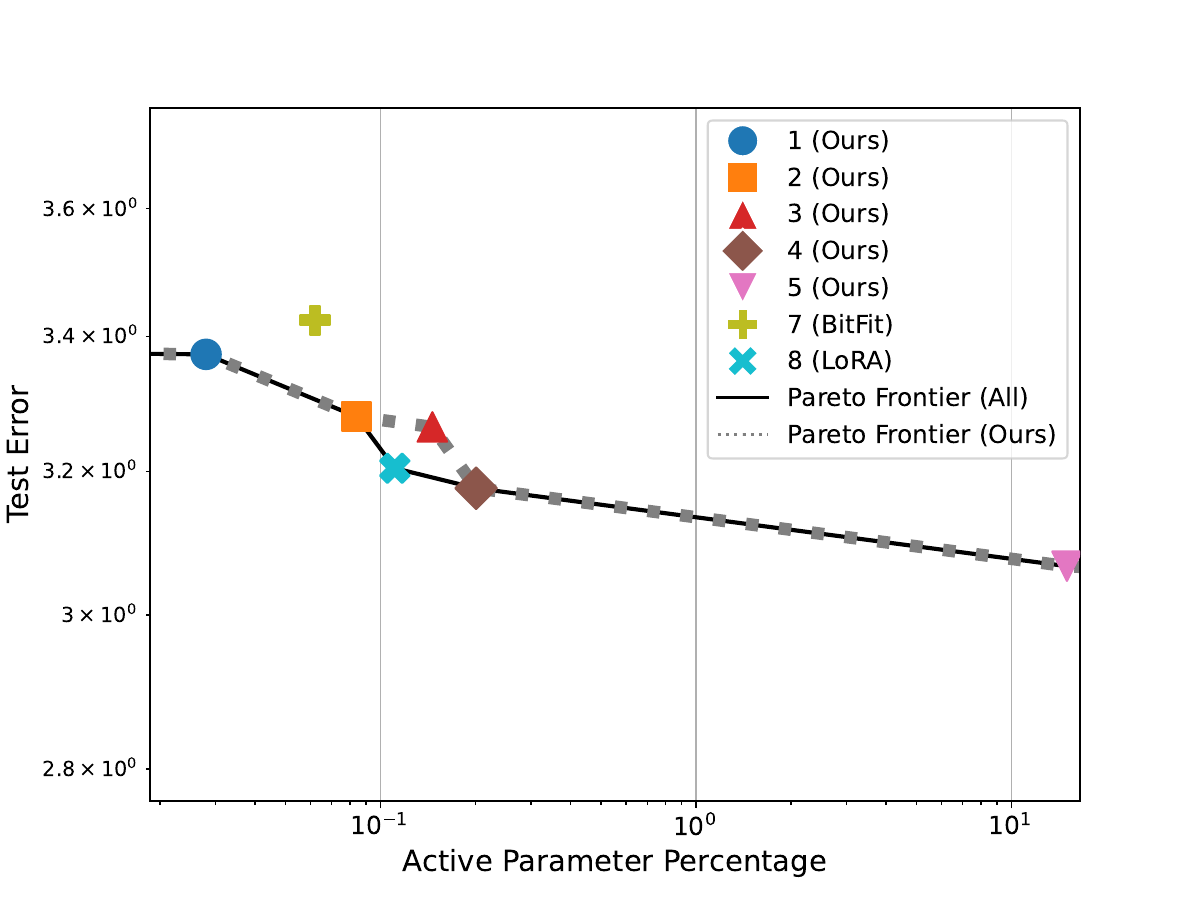}
\hspace{-0.4cm}
\includegraphics[width=0.29\linewidth]{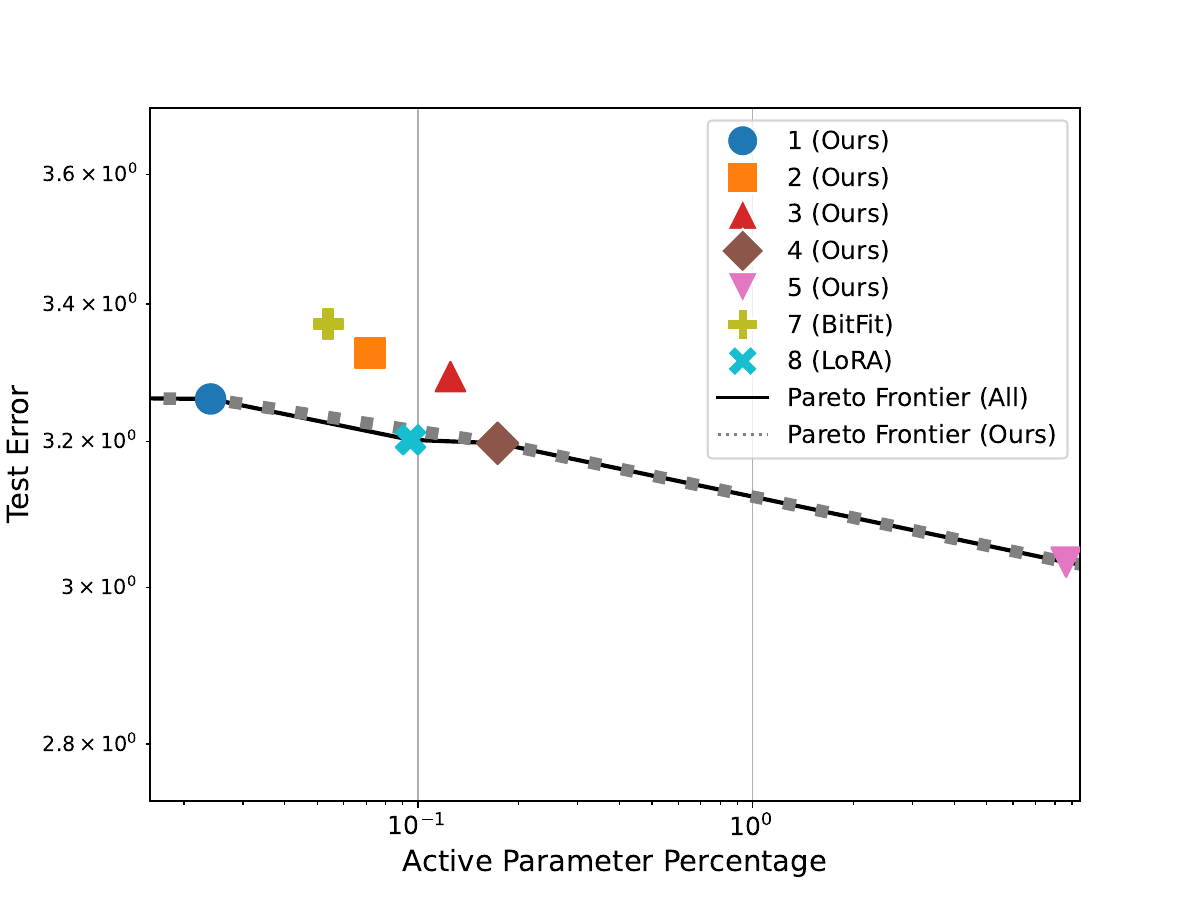}
    \caption{Visualization of Pareto optimality on E2E. Left: actual loss and error of GPT2-small. Right: actual loss and error of GPT2-medium and GPT2-large. Each PEFT is indexed in \Cref{tab:gpt2 PEFT}.}
    \label{fig:pareto gpt}
\end{figure}

\section{Discussion}
We formulate the selection of active set in PEFT as a multi-task optimization problem, and transform it to 0-1 knapsack problem that we solve under Pareto optimality. In particular, our objective in the knapsack problem is Hessian-informed, which demonstrates that different parameters have different influences on the model performance. Finally, we propose AdaPEFT to leverage such influence pattern and select the active set for large model and long training with minimal budget. We note the success of AdaPEFT depends on the grouping of parameters: a sub-optimal grouping strategy may fail to lead to good performance even with AdaPEFT.

\newpage
\clearpage

\bibliography{ref}

\begin{thebibliography}{10}

\bibitem{biderman2024lora}
Dan Biderman, Jacob Portes, Jose Javier~Gonzalez Ortiz, Mansheej Paul, Philip Greengard, Connor Jennings, Daniel King, Sam Havens, Vitaliy Chiley, Jonathan Frankle, Cody Blakeney, and John~Patrick Cunningham.
\newblock Lo{RA} learns less and forgets less.
\newblock {\em Transactions on Machine Learning Research}, 2024.
\newblock Featured Certification.

\bibitem{bu2024automatic}
Zhiqi Bu and Shiyun Xu.
\newblock Gradient descent with generalized newton’s method.
\newblock In {\em The Thirteenth International Conference on Learning Representations}, 2024.

\bibitem{ding2023parameter}
Ning Ding, Yujia Qin, Guang Yang, Fuchao Wei, Zonghan Yang, Yusheng Su, Shengding Hu, Yulin Chen, Chi-Min Chan, Weize Chen, et~al.
\newblock Parameter-efficient fine-tuning of large-scale pre-trained language models.
\newblock {\em Nature Machine Intelligence}, 5(3):220--235, 2023.

\bibitem{dosovitskiy2020image}
Alexey Dosovitskiy, Lucas Beyer, Alexander Kolesnikov, Dirk Weissenborn, Xiaohua Zhai, Thomas Unterthiner, Mostafa Dehghani, Matthias Minderer, G~Heigold, S~Gelly, et~al.
\newblock An image is worth 16x16 words: Transformers for image recognition at scale.
\newblock In {\em International Conference on Learning Representations}, 2020.

\bibitem{fu2024qlabgrad}
Minghan Fu and Fang-Xiang Wu.
\newblock Qlabgrad: A hyperparameter-free and convergence-guaranteed scheme for deep learning.
\newblock In {\em Proceedings of the AAAI Conference on Artificial Intelligence}, volume~38, pages 12072--12081, 2024.

\bibitem{han2024parameter}
Zeyu Han, Chao Gao, Jinyang Liu, Sai~Qian Zhang, et~al.
\newblock Parameter-efficient fine-tuning for large models: A comprehensive survey.
\newblock {\em arXiv preprint arXiv:2403.14608}, 2024.

\bibitem{houlsby2019parameter}
Neil Houlsby, Andrei Giurgiu, Stanislaw Jastrzebski, Bruna Morrone, Quentin De~Laroussilhe, Andrea Gesmundo, Mona Attariyan, and Sylvain Gelly.
\newblock Parameter-efficient transfer learning for nlp.
\newblock In {\em International conference on machine learning}, pages 2790--2799. PMLR, 2019.

\bibitem{hu2022lora}
Edward~J Hu, Yelong Shen, Phillip Wallis, Zeyuan Allen-Zhu, Yuanzhi Li, Shean Wang, Lu~Wang, and Weizhu Chen.
\newblock Lo{RA}: Low-rank adaptation of large language models.
\newblock In {\em International Conference on Learning Representations}, 2022.

\bibitem{lester2021power}
Brian Lester, Rami Al-Rfou, and Noah Constant.
\newblock The power of scale for parameter-efficient prompt tuning.
\newblock {\em arXiv preprint arXiv:2104.08691}, 2021.

\bibitem{li2021prefix}
Xiang~Lisa Li and Percy Liang.
\newblock Prefix-tuning: Optimizing continuous prompts for generation.
\newblock {\em arXiv preprint arXiv:2101.00190}, 2021.

\bibitem{liloftq}
Yixiao Li, Yifan Yu, Chen Liang, Nikos Karampatziakis, Pengcheng He, Weizhu Chen, and Tuo Zhao.
\newblock Loftq: Lora-fine-tuning-aware quantization for large language models.
\newblock In {\em The Twelfth International Conference on Learning Representations}, 2024.

\bibitem{liu2021p}
Xiao Liu, Kaixuan Ji, Yicheng Fu, Weng~Lam Tam, Zhengxiao Du, Zhilin Yang, and Jie Tang.
\newblock P-tuning v2: Prompt tuning can be comparable to fine-tuning universally across scales and tasks.
\newblock {\em arXiv preprint arXiv:2110.07602}, 2021.

\bibitem{liu2019roberta}
Yinhan Liu, Myle Ott, Naman Goyal, Jingfei Du, Mandar Joshi, Danqi Chen, Omer Levy, Mike Lewis, Luke Zettlemoyer, and Veselin Stoyanov.
\newblock Roberta: A robustly optimized bert pretraining approach.
\newblock {\em arXiv preprint arXiv:1907.11692}, 2019.

\bibitem{martello1990knapsack}
Silvano Martello and Paolo Toth.
\newblock {\em Knapsack problems: algorithms and computer implementations}.
\newblock John Wiley \& Sons, Inc., 1990.

\bibitem{prottasha2025peft}
Nusrat~Jahan Prottasha, Upama~Roy Chowdhury, Shetu Mohanto, Tasfia Nuzhat, Abdullah~As Sami, Md~Shamol Ali, Md~Shohanur~Islam Sobuj, Hafijur Raman, Md~Kowsher, and Ozlem~Ozmen Garibay.
\newblock Peft a2z: Parameter-efficient fine-tuning survey for large language and vision models.
\newblock {\em arXiv preprint arXiv:2504.14117}, 2025.

\bibitem{radford2019language}
Alec Radford, Jeffrey Wu, Rewon Child, David Luan, Dario Amodei, Ilya Sutskever, et~al.
\newblock Language models are unsupervised multitask learners.
\newblock {\em OpenAI blog}, 1(8):9, 2019.

\bibitem{raffel2020exploring}
Colin Raffel, Noam Shazeer, Adam Roberts, Katherine Lee, Sharan Narang, Michael Matena, Yanqi Zhou, Wei Li, and Peter~J Liu.
\newblock Exploring the limits of transfer learning with a unified text-to-text transformer.
\newblock {\em Journal of machine learning research}, 21(140):1--67, 2020.

\bibitem{wangglue}
Alex Wang, Amanpreet Singh, Julian Michael, Felix Hill, Omer Levy, and Samuel~R Bowman.
\newblock Glue: A multi-task benchmark and analysis platform for natural language understanding.
\newblock In {\em International Conference on Learning Representations}.

\bibitem{wang2024parameter}
Luping Wang, Sheng Chen, Linnan Jiang, Shu Pan, Runze Cai, Sen Yang, and Fei Yang.
\newblock Parameter-efficient fine-tuning in large models: A survey of methodologies.
\newblock {\em arXiv preprint arXiv:2410.19878}, 2024.

\bibitem{wang2024lora}
Shaowen Wang, Linxi Yu, and Jian Li.
\newblock Lora-ga: Low-rank adaptation with gradient approximation.
\newblock {\em arXiv preprint arXiv:2407.05000}, 2024.

\bibitem{wang2024lora2}
Zhengbo Wang and Jian Liang.
\newblock Lora-pro: Are low-rank adapters properly optimized?
\newblock {\em arXiv preprint arXiv:2407.18242}, 2024.

\bibitem{zaken2022bitfit}
Elad~Ben Zaken, Yoav Goldberg, and Shauli Ravfogel.
\newblock Bitfit: Simple parameter-efficient fine-tuning for transformer-based masked language-models.
\newblock In {\em Proceedings of the 60th Annual Meeting of the Association for Computational Linguistics (Volume 2: Short Papers)}, pages 1--9, 2022.

\bibitem{zhang2023lora}
Longteng Zhang, Lin Zhang, Shaohuai Shi, Xiaowen Chu, and Bo~Li.
\newblock Lora-fa: Memory-efficient low-rank adaptation for large language models fine-tuning.
\newblock {\em arXiv preprint arXiv:2308.03303}, 2023.

\bibitem{zhaotuning}
Bingchen Zhao, Haoqin Tu, Chen Wei, Jieru Mei, and Cihang Xie.
\newblock Tuning layernorm in attention: Towards efficient multi-modal llm finetuning.
\newblock In {\em The Twelfth International Conference on Learning Representations}.

\bibitem{zhongSeq2SQL2017}
Victor Zhong, Caiming Xiong, and Richard Socher.
\newblock Seq2sql: Generating structured queries from natural language using reinforcement learning.
\newblock {\em CoRR}, abs/1709.00103, 2017.

\bibitem{zhu2021automatic}
Yingqiu Zhu, Danyang Huang, Yuan Gao, Rui Wu, Yu~Chen, Bo~Zhang, and Hansheng Wang.
\newblock Automatic, dynamic, and nearly optimal learning rate specification via local quadratic approximation.
\newblock {\em Neural Networks}, 141:11--29, 2021.

\end{thebibliography}
\bibliographystyle{plain}


\newpage
\appendix
\onecolumn

\section{Experiment details}\label{app:exp}
Our experiments are run on A100 GPU, though our approach is independent to the choice of device. Our experiments are three steps: (I) Visualizing the influence under FMT, trained with \Cref{alg:autoSGD} which is updated every 16 iterations (lazy updating); (II) Selecting influential parameter groups to determine PEFT configurations; (III) Training PEFT with GeN AdamW \cite{bu2024automatic} (lazy updating frequence 8). For hyper-parameters not mentioned here, we follow \cite{hu2022lora}. 

\subsection{Visualization methodology}
\label{app:visual}
Deep learning stochastic optimization is highly non-convex and may be unstable. In addition, the curve fitting approach may occasionally have numerical errors. Therefore, we adopt some outlier removal and smoothing tricks to give reproducible and clear patterns.

For each group $k$ (i.e., a row), our input is a time series of $\text{PPI}_k(t)=\frac{(\G_{(k),t}^\top\g_{(k),t})^2}{\g_{(k),t}^\top\H_{(k),t}\g_{(k),t}\cdot |\w_{(k)}|}$. 

For heatmaps and APPI plots, we remove outliers by Interquartile Range (IQR) method \footnote{The outliers are removed by excluding values outside the range $[Q1-3*iqr, Q3+3*iqr]$, where the interquartile range $iqr$ is defined as the difference between the 25-th percentile $Q1$ and the 75-th percentile $Q3$ of the data, representing the spread of the central 50\%.}, smoothen by exponential moving average. For heatmaps, we additionally divide each row by the first row (the \textit{others}, which is majority of parameters). Hence, the first row always stands for one unit of influence. As shown below, our methodology is robust to random seeds in terms of ranking.
\begin{figure}[!htb]
\centering
\includegraphics[width=0.24\linewidth]{figs/HMsst2+seed22+roberta-base+autolr2e-05_nan+heatmap.pdf}
\includegraphics[width=0.24\linewidth]{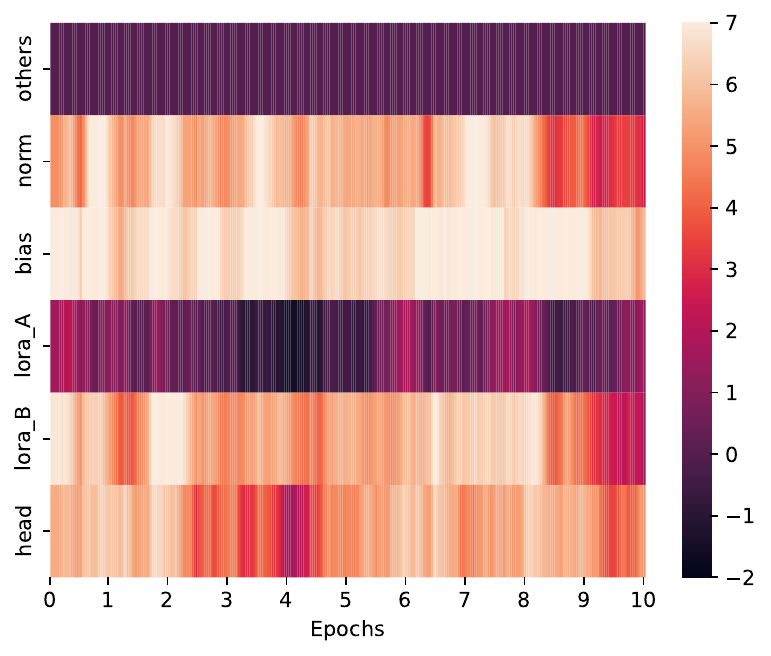}
\includegraphics[width=0.24\linewidth]{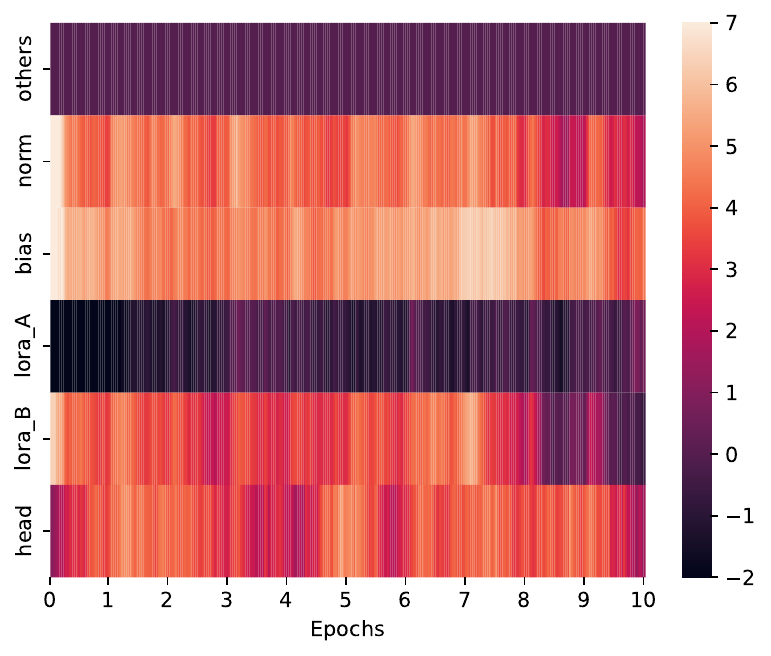}
\includegraphics[width=0.24\linewidth]{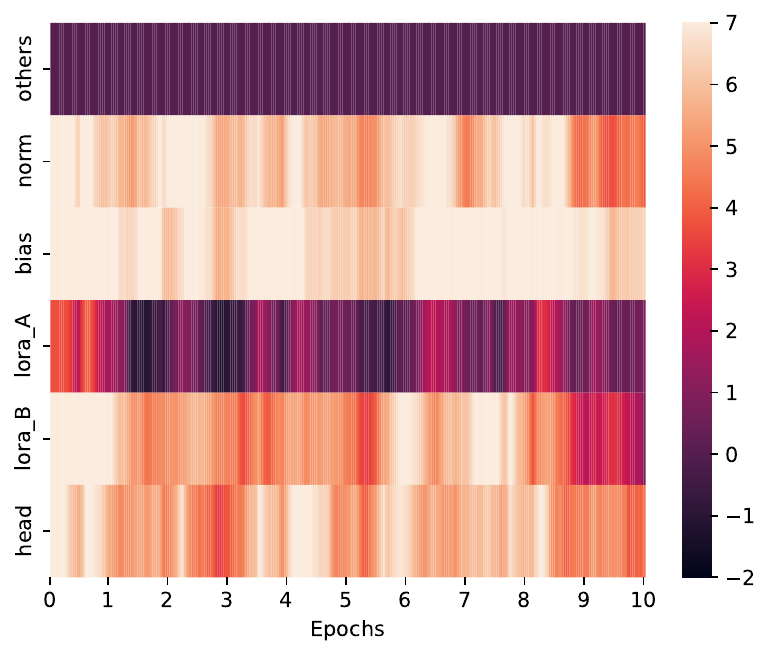}
\caption{Heatmap of PPI with RoBERTa-base on SST2 dataset from different random seeds.}
\label{fig:heatmap seeds}
\end{figure}

\subsection{Natural language understanding}\label{app:lora}

For NLU tasks, we use batch size 128 and an initial learning rate 2e-5. For CoLA training with RoBERTa, we use a total of 3 epochs. For the rest, we use total epochs of 10. The evaluation metric is test accuracy.



   

\subsection{GPT2}\label{app:nlg}
For GPT2, we experiment on the E2E dataset. For FMT with GeN AdamW, we use initial learning rate 1e-4; for PEFT, it is 1e-3. The sequence length is 128, the total batch size is 256. The total number of epochs for GPT2 (small, medium and large) is 5.



\subsection{ViT classification}
We use ImageNet pre-trained ViT \cite{dosovitskiy2020image}, which can be loaded
from \texttt{timm} library. We resize all images to 224x224 and normalize the pixel values to [-1,1]. 
We use initial learning rate 1e-4. We apply \Cref{alg:autoSGD} every 16 iterations.

\begin{figure}[H]
    \centering
    \includegraphics[width=0.24\linewidth]{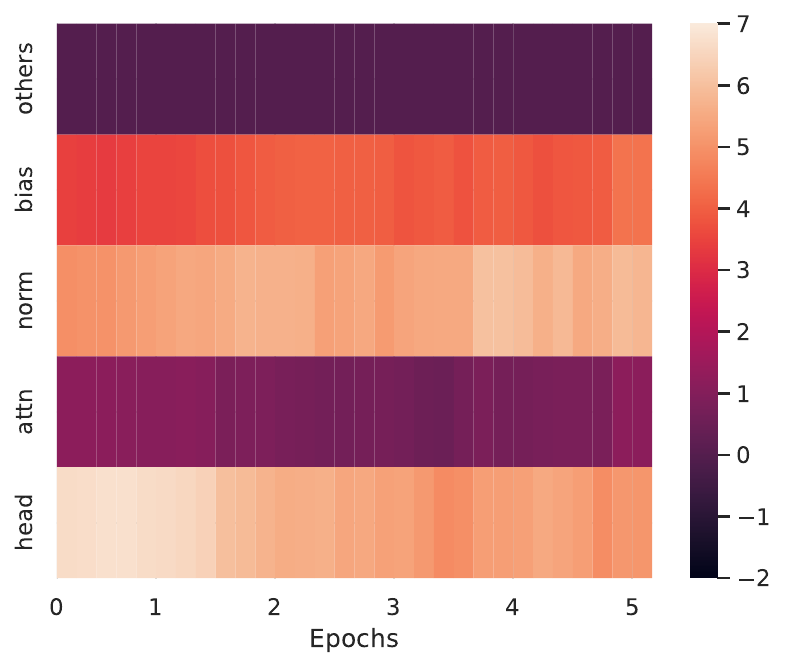}
    \includegraphics[width=0.24\linewidth]{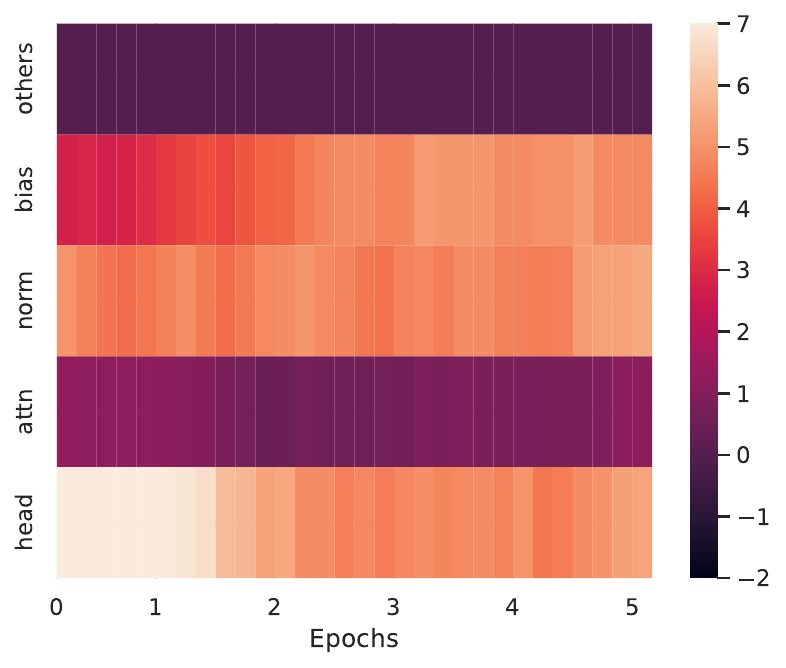}
    \includegraphics[width=0.24\linewidth]{figs/HMCIFAR100+seed1+e5+vit_base_patch16_224+autolr0.0001_nan+heatmap.pdf}
    \includegraphics[width=0.24\linewidth]{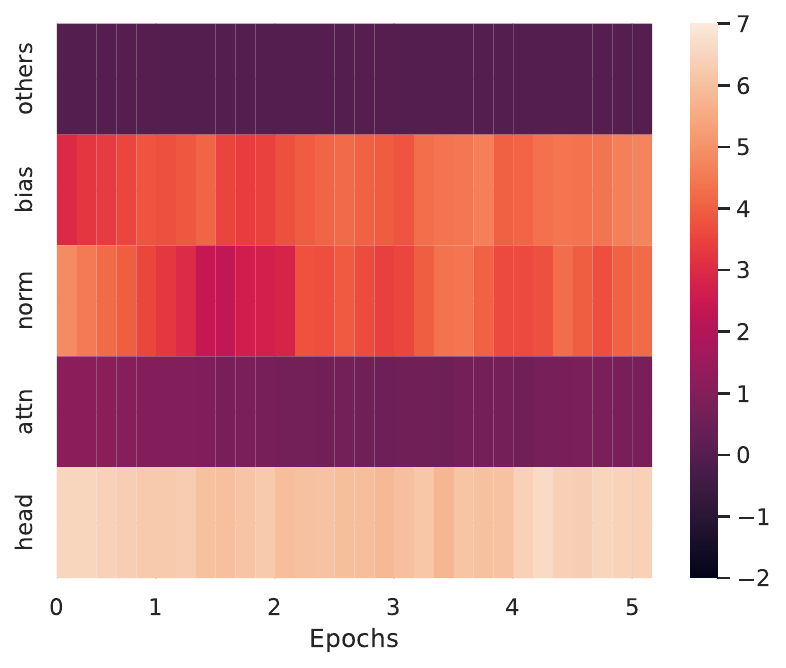}
    \caption{Heatmap of PPI on CIFAR100. Left to right: ViT tiny, small, base and large. }
    \label{fig:cv}
\end{figure}





\section{Extended experiments}
\label{app:new exp}
We extend \Cref{fig:pareto roberta} below. Empirically speaking, our AdaPEFT Pareto frontier generated from 6 active sets matches closely the frontier generated from $2^6=64$ active sets (all possible combinations) throughout the training.

\begin{figure}[!htb]
    \centering
    \includegraphics[width=0.23\linewidth]{figs/HMsst2+seed22+roberta-base+autolr2e-05_nan+theory_pareto_5}
    \includegraphics[width=0.23\linewidth]{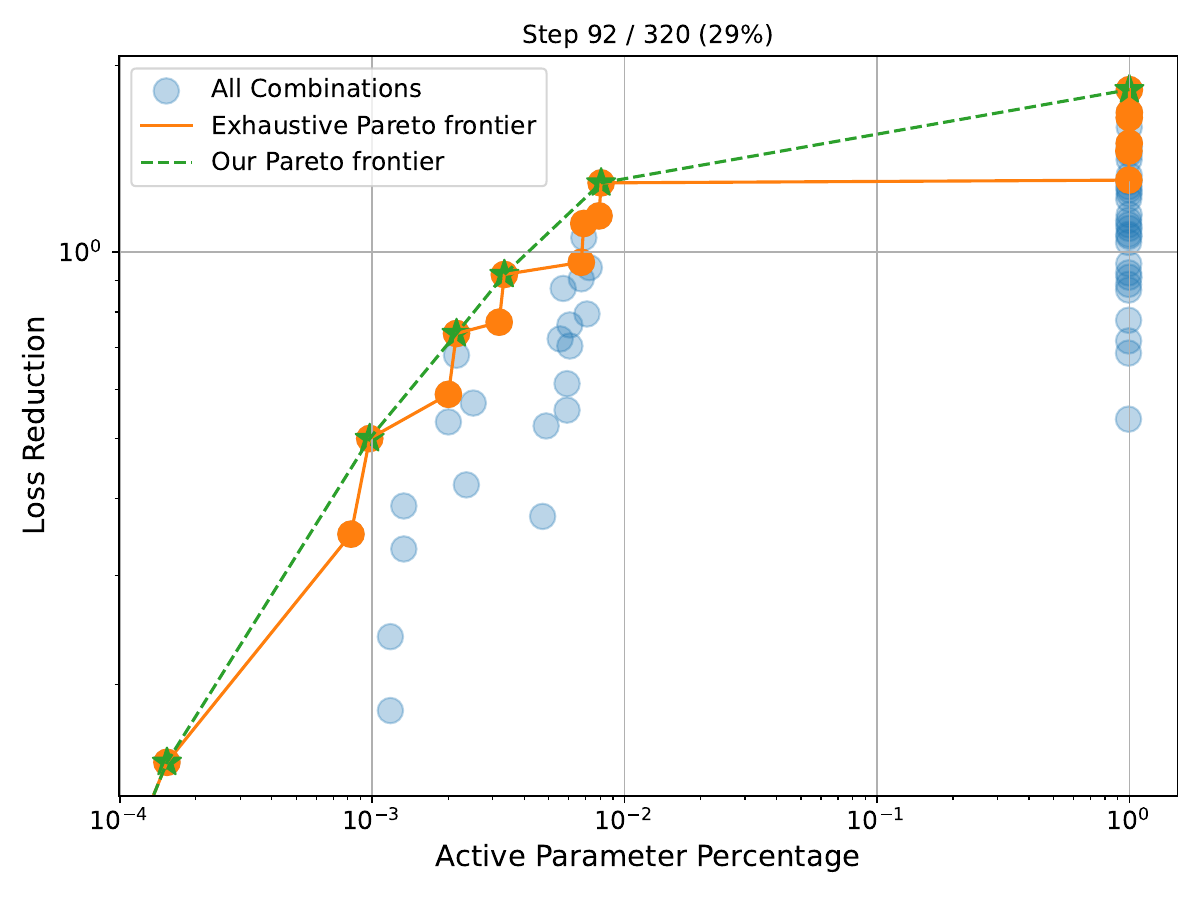}    \includegraphics[width=0.23\linewidth]{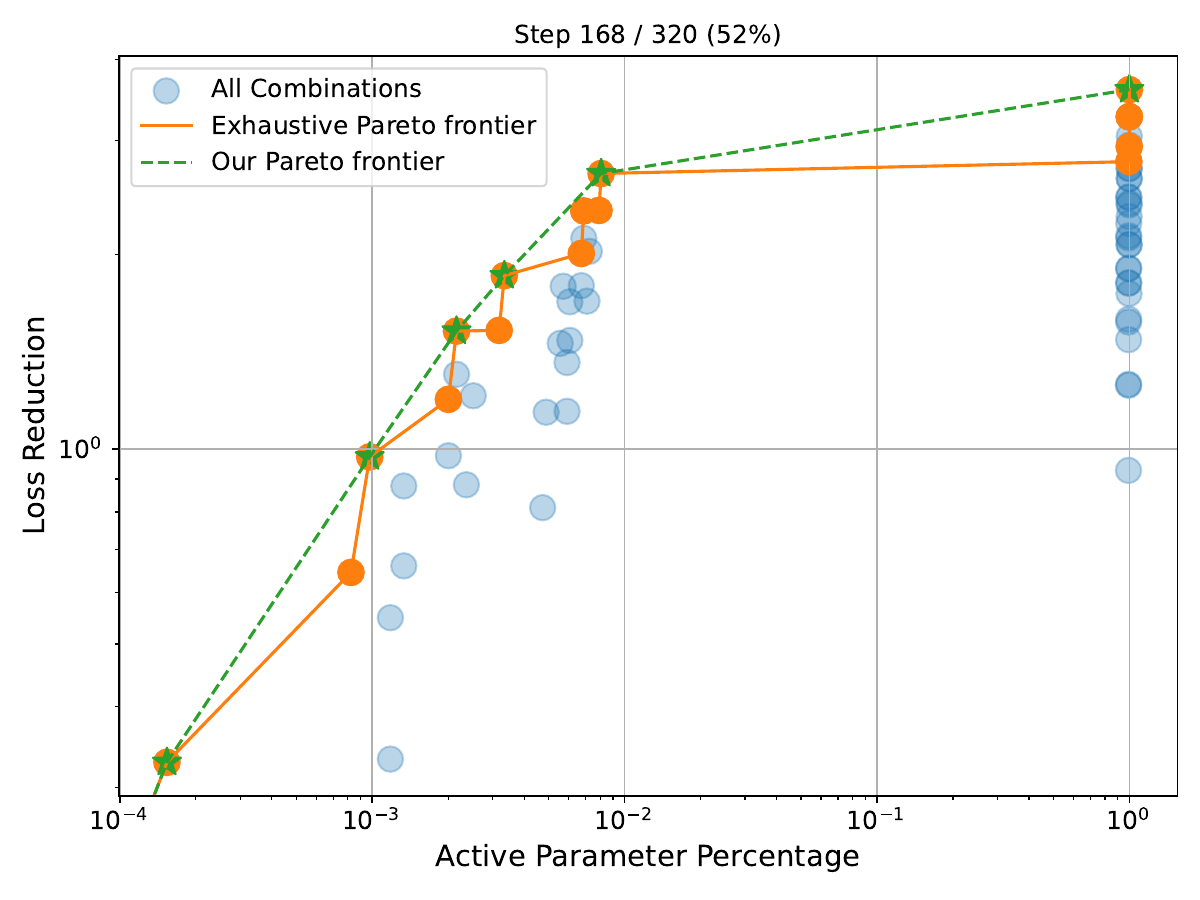}
    \\
    \includegraphics[width=0.23\linewidth]{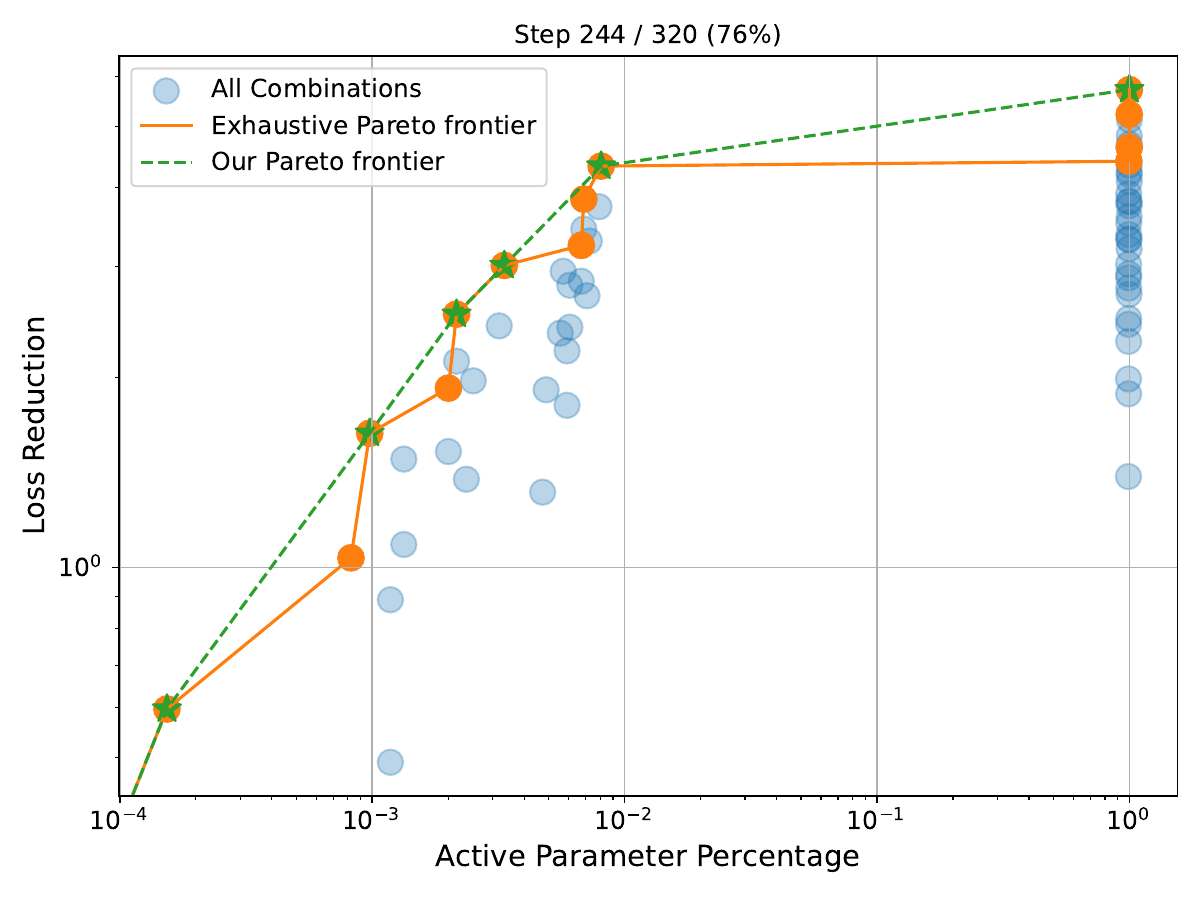}    \includegraphics[width=0.23\linewidth]{figs/HMsst2+seed22+roberta-base+autolr2e-05_nan+theory_pareto_100}
    \caption{Loss reduction in APPI (see \eqref{eq:PPI}) at different iterations in training. SST2 with RoBERTa-base. 5 epochs, 5270 iterations, logged every 16 iterations by lazy updating ($5270/16\approx 320$). Dark blue color is due to overlapping.}
\end{figure}

Furthermore, we reproduce the success of AdaPEFT Pareto frontier on two different architectures, two datasets, and various model sizes.

\begin{figure}[!htb]
    \centering
\begin{subfigure}{0.3\linewidth}
\includegraphics[width=\linewidth]{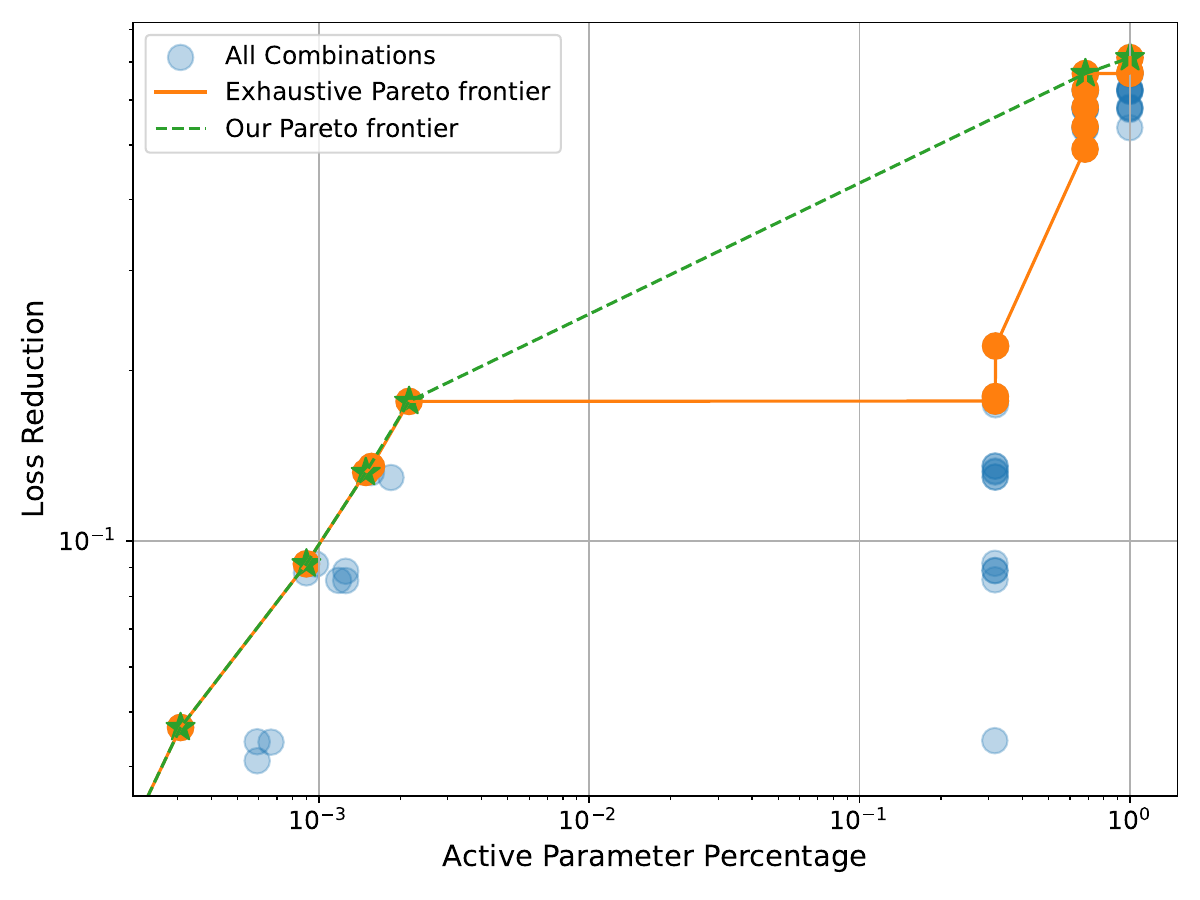}
\caption{GPT2-small on E2E}
\end{subfigure}
\begin{subfigure}{0.3\linewidth}
\includegraphics[width=\linewidth]{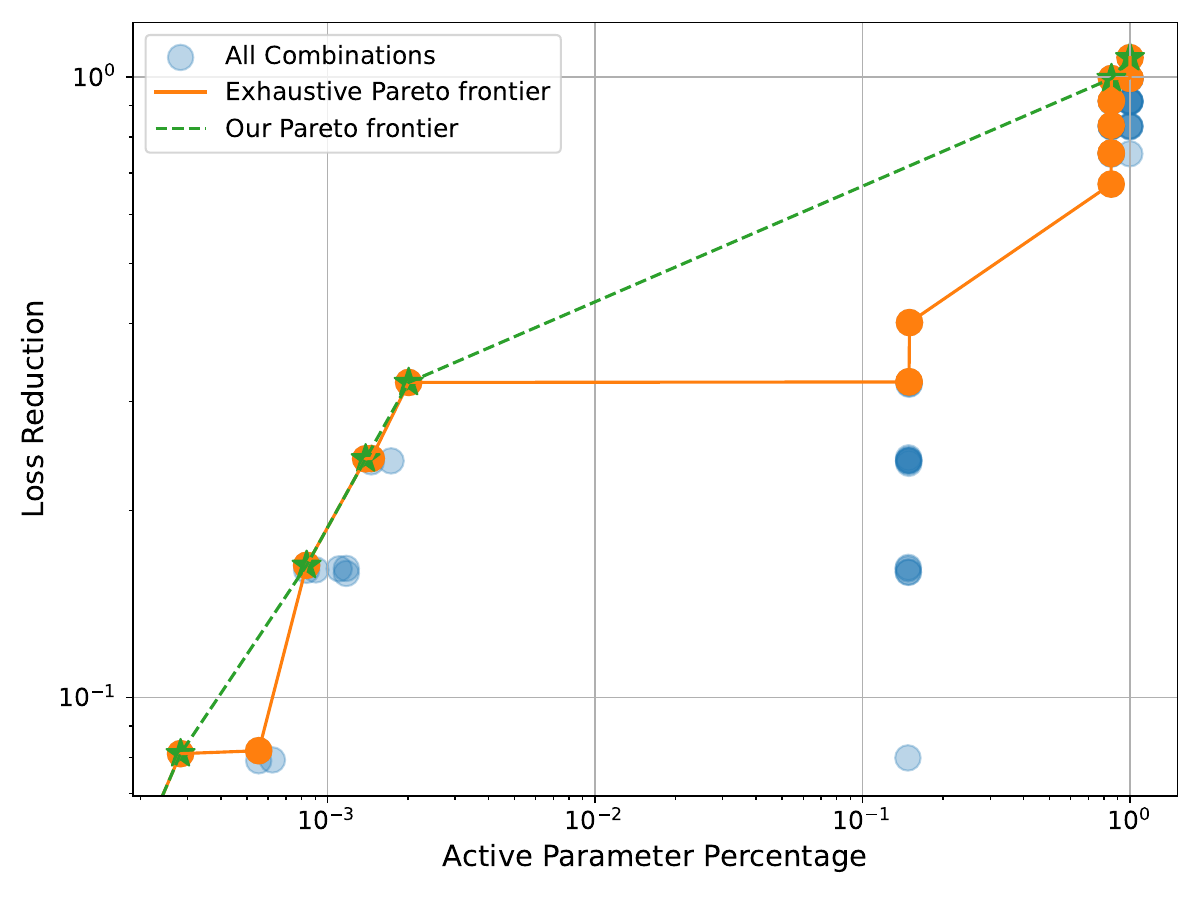}
\caption{GPT2-medium on E2E}
\end{subfigure}
\begin{subfigure}{0.3\linewidth}
\includegraphics[width=\linewidth]{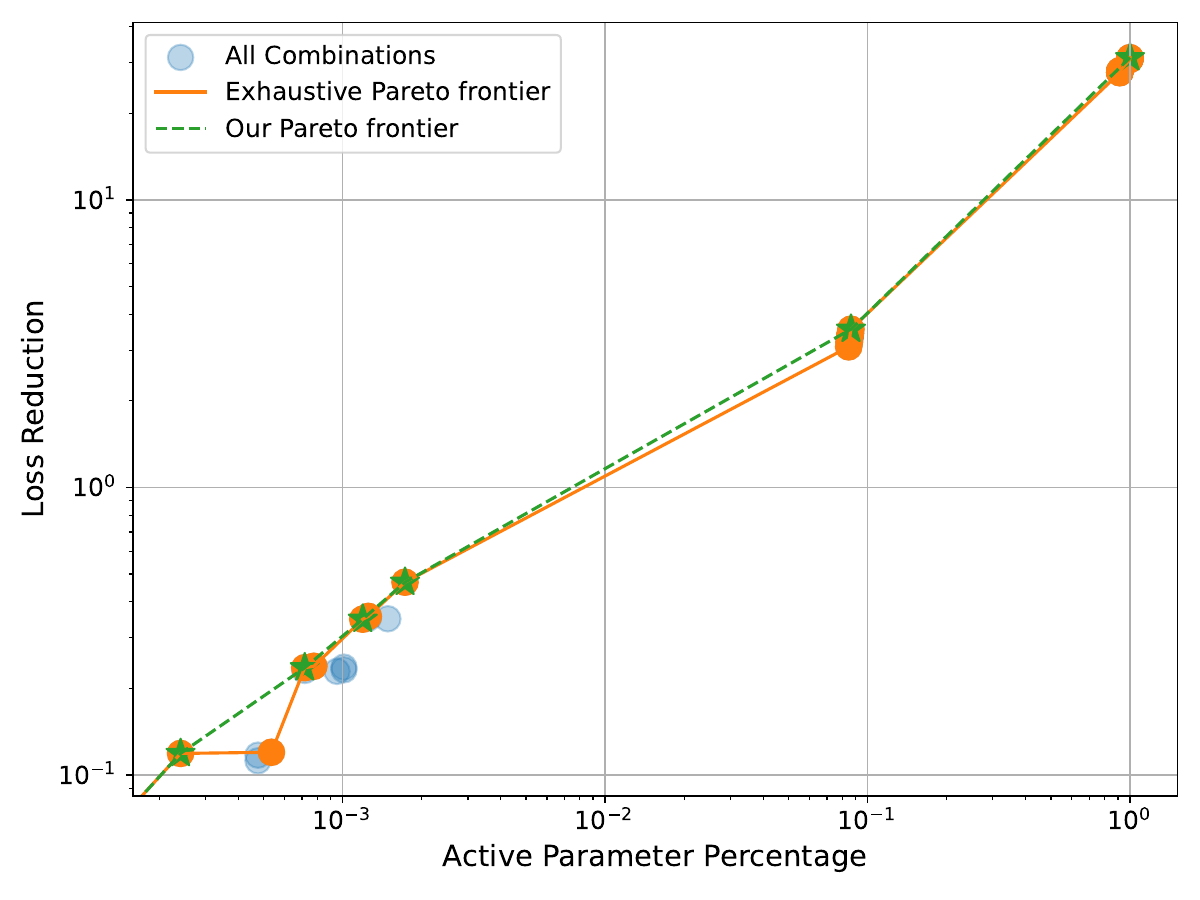}
\caption{GPT2-large on E2E}
\end{subfigure}
\begin{subfigure}{0.3\linewidth}
\includegraphics[width=\linewidth]{figs/HMsst2+seed22+roberta-base+autolr2e-05_nan+theory_pareto_100}
\caption{RoBERTa-base on SST2}
\end{subfigure}
\begin{subfigure}{0.3\linewidth}
\includegraphics[width=\linewidth]{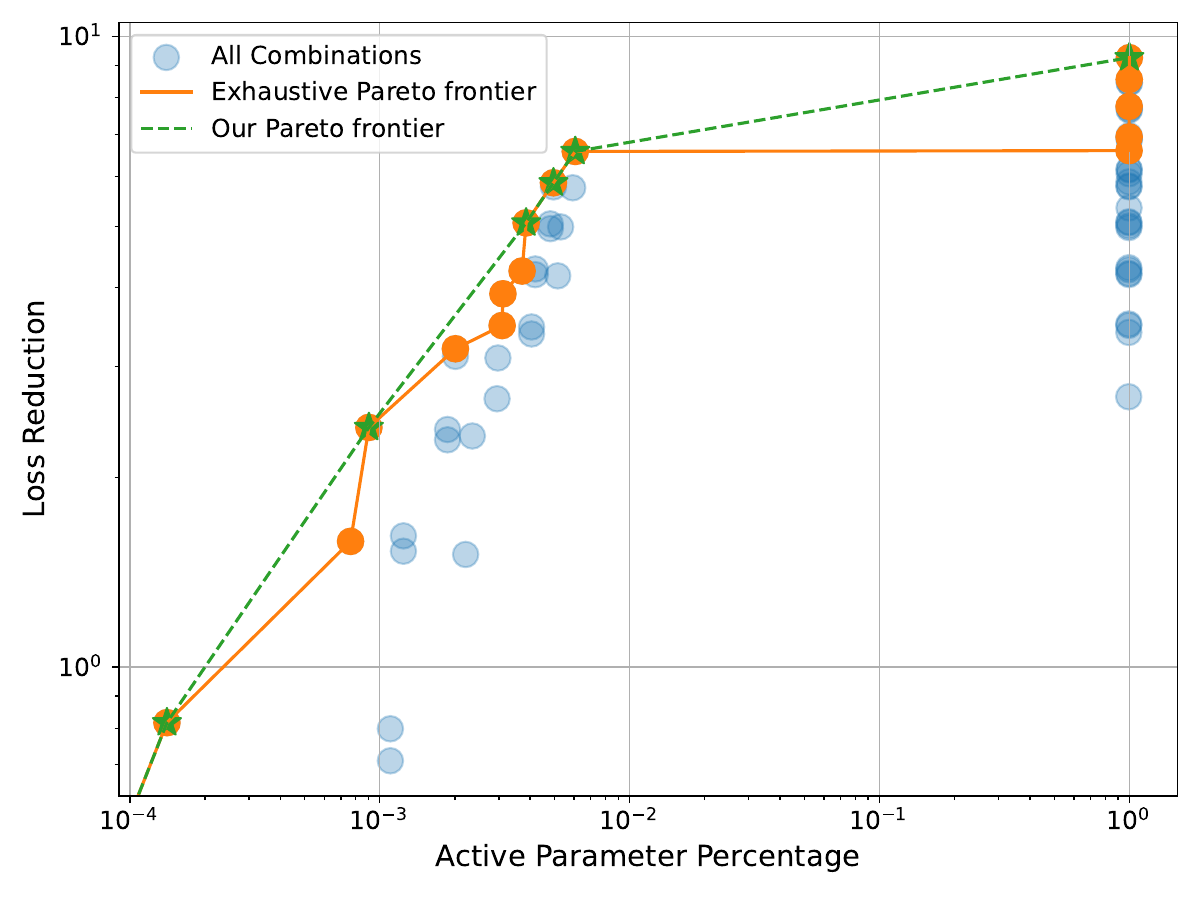}
\caption{RoBERTa-large on SST2}
\end{subfigure}
\caption{Loss reduction in APPI (see \eqref{eq:PPI}) at the last iteration. Dark blue color is due to overlapping.}
\end{figure}

\clearpage
\section{Tables}
\label{app:tables}
\begin{table}[!htb]
\centering
\caption{Performance of RoBERTa models on SST2. (Y)es indicates a parameter group is trainable. (N)o indicates a group is frozen.}    \resizebox{\linewidth}{!}{
    \begin{tabular}{c|c||c|c|c|c|c|c||c|c}
        model &codename  &others&norm&bias&lora\_A&lora\_B&head&accuracy&\% param\\\hline
         RoBERTa-base&1&\N&\Y&\N&\N&\N&\N&91.63&0.015 \\
         (ours)&2&\N&\Y&\Y&\N&\N&\N&93.23&0.098\\
         &3&\N&\Y&\Y&\N&\Y&\N&93.46&0.216\\
         &4&\N&\Y&\Y&\N&\Y&\Y&94.27&0.689\\
         &5&\N&\Y&\Y&\Y&\Y&\Y&94.50&0.807\\
         &FMT&\Y&\Y&\Y&\Y&\Y&\Y&93.35&100
         \\\hline
         RoBERTa-base& 7(BitFit)&\N&\N&\Y&\N&\N&\Y&94.04&0.556\\
         (heuristic)&8(LoRA)&\N&\N&\N&\Y&\Y&\Y&93.81&0.709\\
         &9(LayerNorm)&\N&\Y&\N&\N&\N&\Y&93.00&0.489\\
        &10(linear probing)&\N&\N&\N&\N&\N&\Y&85.21&0.473\\
         \hline\hline
         RoBERTa-large&1&\N&\Y&\N&\N&\N&\N&94.95&0.014 \\
         (ours)&2&\N&\Y&\Y&\N&\N&\N&95.99&0.091\\
         &3&\N&\Y&\Y&\N&\Y&\N&95.87&0.201\\
         &4&\N&\Y&\Y&\N&\Y&\Y&96.10&0.496\\
         &5&\N&\Y&\Y&\Y&\Y&\Y&96.22&0.606\\
         &FMT&\Y&\Y&\Y&\Y&\Y&\Y&95.99&100
         \\\hline
         RoBERTa-large& 7(BitFit)&\N&\N&\Y&\N&\N&\Y&95.53&0.371\\
         (heuristic)&8(LoRA)&\N&\N&\N&\Y&\Y&\Y&96.22&0.516\\
         &9(LayerNorm)&\N&\Y&\N&\N&\N&\Y&95.07&0.309\\
        &10(linear probing)&\N&\N&\N&\N&\N&\Y&87.61&0.295\\
         \hline\hline
    \end{tabular}
}
    \label{tab:RoBERTa PEFT}
\end{table}

\begin{table}[!htb]
    \centering
\caption{Performance of GPT models on E2E. (Y)es indicates a parameter group is trainable. (N)o indicates a group is frozen. We transfer the PEFT identified at $\psi=10$ to larger models.}    \resizebox{\linewidth}{!}{
    \begin{tabular}{c|c||c|c|c|c|c|c||c|c}
        model &codename  &others&norm&bias&lora\_A&lora\_B&embed&perplexity&\% param\\\hline
         GPT2-small&1&\N&\Y&\N&\N&\N&\N&3.68&0.031 \\
         (ours)&2&\N&\Y&\N&\N&\Y&\N&3.63&0.09\\
         &3&\N&\Y&\Y&\N&\Y&\N&3.55&0.157\\
         &4&\N&\Y&\Y&\Y&\Y&\N&3.39&0.216\\
         &5&\N&\Y&\Y&\Y&\Y&\Y&3.16&31.827\\
         &FMT&\Y&\Y&\Y&\Y&\Y&\Y&3.07&100
         \\\hline
         GPT2-small& 7(BitFit)&\N&\N&\Y&\N&\N&\N&3.76&0.067\\
         (heuristic)&8(LoRA)&\N&\N&\N&\Y&\Y&\N&3.46&0.118\\
        \hline\hline
         GPT2-medium&1&\N&\Y&\N&\N&\N&\N&3.37&0.028 \\
         (ours)&2&\N&\Y&\N&\N&\Y&\N&3.27&0.084\\
         &3&\N&\Y&\Y&\N&\Y&\N&3.26&0.146\\
         &4&\N&\Y&\Y&\Y&\Y&\N&3.17&0.201\\
         &5&\N&\Y&\Y&\Y&\Y&\Y&3.06&14.984\\
         &FMT&\Y&\Y&\Y&\Y&\Y&\Y&3.04&100
         \\\hline
         GPT2-medium& 7(BitFit)&\N&\N&\Y&\N&\N&\N&3.42&0.062\\
         (heuristic)&8(LoRA)&\N&\N&\N&\Y&\Y&\N&3.20&0.111\\\hline\hline
         GPT2-large&1&\N&\Y&\N&\N&\N&\N&3.26&0.024 \\
         (ours)&2&\N&\Y&\N&\N&\Y&\N&3.32&0.072\\
         &3&\N&\Y&\Y&\N&\Y&\N&3.29&0.125\\
         &4&\N&\Y&\Y&\Y&\Y&\N&3.19&0.173\\
         &5&\N&\Y&\Y&\Y&\Y&\Y&3.03&8.645\\
         &FMT&\Y&\Y&\Y&\Y&\Y&\Y&2.96&100
         \\\hline
         GPT2-large& 7(BitFit)&\N&\N&\Y&\N&\N&\N&3.37&0.054\\
         (heuristic)&8(LoRA)&\N&\N&\N&\Y&\Y&\N&3.20&0.095\\\hline\hline
    \end{tabular}
}
    \label{tab:gpt2 PEFT}
\end{table}

\section{Related work}
\label{app:related work}

\paragraph{PEFT methods}
AdaPEFT can work compatibly with many PEFT methods, including those experimented in this work (LoRA and variants, linear probing, BitFit, and LayerNorm), those not experimented (e.g. prompt tuning \cite{lester2021power}, prefix tuning \cite{li2021prefix}, P-tuning \cite{liu2021p}, and adapter \cite{houlsby2019parameter}; see \cite{ding2023parameter,han2024parameter,wang2024parameter,prottasha2025peft} for a list of existing PEFT) and those yet-to-come. By taking more PEFT methods into consideration, we allow a larger search space for $\A$ and expect better performance from AdaPEFT, without noticeably increasing the computational cost if we scale the lazy updating accordingly.

\paragraph{Multi-task optimization}
Our multi-task formulation in \eqref{eq:main question0} and \eqref{eq:binary question MTO} is solved through $\epsilon$-constraint method. Another standard solution is the linear scalarization or weighted sum method: $\min_\A L_\A+a \frac{|\A|}{|\w|}$ for some tunable $a$. Note $\forall a>0$, without assuming convexity of the objective, this solution is guaranteed to be Pareto optimal. There are two potential challenges: (I) the scalarization problem is usually solved by gradient descent, but our subset selection problem is discrete and non-differentiable; (II) because each $a$ corresponds to one point on Pareto frontier, we need to try a number of $a$, whereas the $\epsilon$-constraint method only needs one sorting. Nevertheless, there may be multi-task optimization methods that can be directly applicable to our problems.

\paragraph{Hessian-informed loss and learning rate}
We have leveraged Hessian information to formulate our problems. Besides GeN \cite{bu2024automatic}, a line of work also proposed back-propagation-free ways to compute Hessian-informed learning rate \cite{fu2024qlabgrad,zhu2021automatic}, though the Hessian-informed loss reduction was not presented. It is also possible to use second-order back-propagation (such as Hessian-vector product or Hessian matrix instantiation) to compute the loss reduction we needed. However, this approach will be infeasible unless on very small models.

\end{document}